\theoremstyle{plain}
\newtheorem{thm}{Theorem}
\newtheorem*{thm*}{Theorem}
\newtheorem{prop}{Proposition}[section]
\theoremstyle{definition}
\newtheorem*{exa}{Example}
\newtheorem*{rmk}{Remark} %Unnumbered
\newcommand{\wrt}{\text{w.r.t.}~}
\newcommand{\Tabref}[1]{Table~(\ref{#1})}
\newcommand{\Eqref}[1]{Eq.~(\ref{#1})}
\newcommand{\Figref}[1]{Figure~\ref{#1}}
\newcommand{\Secref}[1]{Section~\ref{#1}}
\newcommand{\Appendixref}[1]{Appendix~\ref{#1}}
\newcommand{\Propref}[1]{Proposition~\ref{#1}}
\newcommand{\calX}{\mathcal{X}}
\newcommand{\calY}{\mathcal{Y}}
\newcommand{\calU}{\mathcal{U}}
\newcommand{\calD}{\mathcal{D}}
\newcommand{\calP}{\mathcal{P}}
\newcommand{\calN}{\mathcal{N}}
\newcommand{\calS}{\mathcal{S}}
\newcommand{\R}{\mathbb{R}}
\newcommand{\A}{\mathbb{A}}
\newcommand{\Prob}{\mathbb{P}}
\newcommand{\lrpar}[1]{\left( #1 \right)}
\newcommand{\lrbra}[1]{\left[ #1 \right]}
\newcommand{\lrcubra}[1]{\left\{ #1 \right\}}
\newcommand{\bigplus}{%
  \DOTSB\mathop{\mathpalette\mattos@bigplus\relax}\slimits@
}
\newcommand\mattos@bigplus[2]{%
  \vcenter{\hbox{%
    \sbox\z@{$#1\sum$}%
    \resizebox{!}{0.9\dimexpr\ht\z@+\dp\z@}{\raisebox{\depth}{$\m@th#1+$}}%
  }}%
  \vphantom{\sum}%
}
\DeclareFontFamily{U}{mathx}{}
\DeclareFontShape{U}{mathx}{m}{n}{<-> mathx10}{}
\DeclareSymbolFont{mathx}{U}{mathx}{m}{n}
\DeclareMathAccent{\widecheck}{0}{mathx}{"71}
\newcommand{\pset}[1]{\calP_{#1}}
\newcommand{\TrSet}{\calD^{\textrm{Tr}}}
\newcommand{\CalSet}{\calD^{\textrm{Cal}}}
\newcommand{\absval}[1]{\left| #1 \right|}
\newcommand{\eqdef}{:=}
\newcommand{\Shap}{\textrm{Shap}}
\newcommand{\PShap}{\textrm{P-Shap}}
\newcolumntype{L}[1]{>{\raggedright\let\newline\\\arraybackslash\hspace{0pt}}m{#1}}
\newcolumntype{C}[1]{>{\centering\let\newline\\\arraybackslash\hspace{0pt}}m{#1}}
\newcolumntype{R}[1]{>{\raggedleft\let\newline\\\arraybackslash\hspace{0pt}}m{#1}}
\def\ps@pprintTitle{%
\let\@oddhead\@empty
\let\@evenhead\@empty
\def\@oddfoot{}%
\let\@evenfoot\@oddfoot}
\begin{document}

\begin{frontmatter}

%%%%%%%%%%%%%%%%%%%%%%%%%%%%%%%%
% Title & runtitle
\title{Unveil Sources of Uncertainty: Feature Contribution to Conformal Prediction Intervals}

%%%%%%%%%%%%%%%%%%%%%%%%%%%%%%%%
% Authors information

\author[a,b,e]{Marouane Il Idrissi}
\author[a]{Agathe~Fernandes Machado}
\author[c,d]{Ewen Gallic}
\author[a]{Arthur Charpentier}

\address[a]{Département de Mathématiques, Université du Québec à Montréal, Montréal, QC Canada}
\address[b]{Institut Intelligence et Données, Université Laval, Québec, QC Canada}
\address[c]{CNRS - Université de Montréal CRM -- CNRS}
\address[d]{Aix Marseille Univ, CNRS, AMSE, Marseille, France}
\address[e]{Corresponding Author - Email: ilidrissi.m@gmail.com}

%%%%%%%%%%%%%%%%%%%%%%%%%%%%%%%%%%%%%%%%%%%%%%%%%
% ABSTRACT
\begin{abstract}
    Cooperative game theory methods, notably Shapley values, have significantly enhanced machine learning (ML) interpretability. However, existing explainable AI (XAI) frameworks mainly attribute average model predictions, overlooking predictive uncertainty. This work addresses that gap by proposing a novel, model-agnostic uncertainty attribution (UA) method grounded in conformal prediction (CP). By defining cooperative games where CP interval properties—such as width and bounds—serve as value functions, we systematically attribute predictive uncertainty to input features. Extending beyond the traditional Shapley values, we use the richer class of Harsanyi allocations, and in particular the proportional Shapley values, which distribute attribution proportionally to feature importance. We propose a Monte Carlo approximation method with robust statistical guarantees to address computational feasibility, significantly improving runtime efficiency. Our comprehensive experiments on synthetic benchmarks and real-world datasets demonstrate the practical utility and interpretative depth of our approach. By combining cooperative game theory and conformal prediction, we offer a rigorous, flexible toolkit for understanding and communicating predictive uncertainty in high-stakes ML applications.
\end{abstract}

%%%%%%%%%%%%%%%%%%%%%%%%%%%%%%%%
% Keywords

\begin{keyword}
Interpretable Machine Learning\sep Explainable Artificial Intelligence\sep Uncertainty Attribution\sep Conformal Prediction\sep Shapley Values
\end{keyword}

\end{frontmatter}

%%===========================================================%%
%% 1. Introduction
\section{Introduction}
Over the past several years, the field of explainable artificial intelligence (XAI) has gained significant attention within machine‑learning (ML) research \cite{NEURIPS2024maze, NEURIPS2024_eb3a9313}. This interest is partly explained by the growing need to interpret and justify models in high‑stakes domains. For instance, in actuarial science, interpretability clarifies premium calculations for policyholders \cite{xin2024trust}; in healthcare, it ensures that diagnostic support systems rely on clinically meaningful features rather than spurious correlations \cite{pahde2025ensuringmedicalaisafety}; and in criminal‑risk assessment, it helps reduce historical biases and prevents sensitive attributes from influencing predictions \cite{chouldechova2017pp}.

When decision‑makers need to understand which features drive a model's output, the theory of cooperative games provides a resourceful framework: it enables a systematic methodology for feature attributions. The Shapley value is the most widely used allocation rule for attributing predictions to input features \cite{NIPS2017_8a20a862, causalshap, Lundberg2020XAI, JMLR:v22:20-1316}. Yet the Shapley values are only one member of a richer family of allocations. The Harsanyi set of allocations \cite{Vasilev2001} generalizes Shapley values and includes variants such as the weighted and proportional Shapley values. This class of allocations yields a flexible spectrum of importance measures rooted in different normative perspectives. Thus, it offers a unique playground for defining a wide range of importance measures for a plethora of different goals.

Cooperative game theoretic approaches are already prominent in XAI through tools like SHAP \cite{NIPS2017_8a20a862}, which decompose model predictions into feature‑wise contributions. While SHAP offers unparalleled insights into a model's behavior, other key aspects may be of interest to practitioners. For instance, quantifying predictive uncertainty has gained a lot of traction in modern applications, serving as an important tool for assessing the reliability of model predictions. This capability is particularly valuable in critical domains such as clinical diagnostics, autonomous driving, automated trading, and energy production. Conformal prediction (CP) fulfills this need by producing finite‑sample valid prediction intervals for virtually any ML model under broad assumptions on the data‑generating process \cite{vovkCP2008tutorial}. CP has demonstrated practical value in applications like environmental monitoring for climate policymaking, where it effectively mitigates the risk of overconfident models \cite{Singh2024}. The CP literature is quickly expanding, with many recent theoretical and practical advances.

XAI and uncertainty quantification have often been combined to manage risks in high-stakes settings \cite{Barredo2020, Razavi2021, Iooss2022, milEJS2024}, leading to uncertainty attribution (UA) methods. In ML, UA seeks to identify which features most influence a model's predictive uncertainty \cite{DBLPeccvWangBJ24}. Proxies for uncertainty include information‑theoretic quantities (e.g., entropy) \cite{Lundberg2020XAI, JMLR:v22:20-1316, pmlr-v80-chen18j, pmlr-v206-jethani23a, watson2023explaining, NEURIPS2020_c7bf0b7c}, predictive variance \cite{BLEY2025111171, iversen2024}, and counterfactual explanations \cite{antoran2021getting, ley2022aaai, pmlr-v180-perez22a}.

%%===========================================================%%
%% 1.1 Contributions
\subsection{Contributions}
Our work leverages feature attribution methods to deepen the understanding of predictive uncertainty. The main contributions are:

\textbf{CP-based model-agnostic UA method to decompose uncertainty}~ We introduce a novel regression-model-agnostic UA approach to decompose uncertainty, measured using CP-based quantities (e.g., width, midpoint, upper and lower values of the prediction intervals). This method effectively provides actionable insights into how each feature affects predictive uncertainty.

\textbf{Beyond Shapley: Harsanyi allocations and proportional Shapley values}~ Building on cooperative game theory, we employ the Harsanyi set of allocations and introduce the proportional Shapley values. Whereas classical Shapley values implement an egalitarian redistribution of dividends, proportional Shapley values allocate them in proportion to feature contributions. We use and compare both allocations within our UA framework.

\textbf{Statistically grounded approximations for faster computations}~ Attribution methods often suffer from high computational cost. Alongside an exact algorithm, we develop a Monte‑Carlo approximation scheme with provable statistical guarantees, granting explicit control over runtime.

\textbf{Extensive empirical evaluation}~ We validate our method through comprehensive experiments on simulated and real‑world datasets spanning diverse dimensions, sample sizes, and data types.

The proofs of the theoretical results presented in the article are postponed to Appendix~\ref{apdx:proofs}.
%%===========================================================%%
%% 1.2 Related work 
\subsection{Related work}

\textbf{Feature attributions within the CP framework}~ Several studies connect CP and XAI. Some quantify uncertainty in feature‑importance scores via CP \cite{pmlr-v204-alkhatib23a, pmlr-v230-alkhatib24a}, while others use Shapley values as conformity functions for CP intervals \cite{pmlr-v152-jaramillo21a}. To date, only \cite{Mehdiyev2024} directly attributes CP‑derived uncertainty, focusing on Shapley values with quantile‑regression forests; UA methods tailored to CP remain largely unexplored.

\textbf{Other intersections of XAI and CP}~ Beyond UA, substantial crossover exists between CP‑based uncertainty quantification and interpretable ML. Examples include oracle coaching combined with CP \cite{pmlr-v105-johansson19a}; \emph{conformal trees} that enforce homogeneous outputs across leaves \cite{gil2024uai, treeconform2019}; and rule‑based classifiers exploiting rule‑boundary geometry \cite{pmlr-v204-narteni23a}. Feature‑perturbation analyses extend to CP‑specific quantities: \textit{ConformaSight} explains the size and coverage of non‑adaptive CP sets via counterfactual perturbations \cite{Yapicioglu2024}; \cite{ce2024, pmlr-v230-lofstrom24a} generate factual and counterfactual explanations of calibrated CP intervals; and \cite{MEHDIYEV2025110363} apply permutation feature importance across training, calibration, and test sets for CP intervals in regression models.

%%===========================================================%%
%% 2. Conformal predictions and cooperative games
\section{Conformal predictions and cooperative games}
This section first reviews the \emph{split conformal prediction} (SCP) framework, also called \emph{inductive conformal prediction} in the literature \cite{Zaffran2024}, and then introduces the cooperative-game concepts required for our method.

\textbf{Notations}~ Let $\calD \eqdef \lrcubra{\lrpar{X_i, Y_i}}_{i=1}^n \in \lrpar{\calX \times \calY}^n$ be a sequence of \emph{exchangeable random variables} drawn from an unknown distribution $P_{X,Y}$, where $\calX \subseteq \R^d$ and $\calY \subseteq \R$. Let $(X_{n+1}, Y_{n+1}) \sim P_{X,Y}$ denote an additional data point outside $\calD$. Denote $D = \lrcubra{1,\dots, d}$ and let $\pset{D}$ be its power set (the set of all subsets of $D$). For each $A \in \pset{D}$, let $X_i^{(A)} \in \calX_A \subseteq \R^{|A|}$ denote the sub-vector that retains only the features whose indices lie in $A$; by convention, $X_i^{(\emptyset)} := \emptyset$ and $X_i^{(D)} := X_i$. Define $\calD_A \eqdef \lrcubra{\lrpar{X_i^{(A)}, Y_i}}_{i=1}^n$ . Splitting $\lrbra{n}$ at random into two disjoint parts yields a \emph{training set} $\TrSet \subset \calD$ (with $\TrSet_A \subset \calD_A$ for each $A$) and a \emph{calibration set} $\CalSet \subset \calD$ (with $\CalSet_A \subset \calD_A$ for each $A$).

%%================================%%
%% 2.1 SCP
\subsection{Split conformal prediction} \label{sec:scp}

Constructing an SCP interval requires two ingredients:  
i) a \emph{model} $\widehat{f} : \calX \rightarrow \calY$ fitted on $\TrSet$ by a learning algorithm $\A$;  
ii) a \emph{conformity score} $s : \calX \times \calY \times \calY^\calX \rightarrow \R$, used to form the set $\calS = \lrcubra{s(X_i, Y_i, \widehat{f}) : (X_i, Y_i) \in \CalSet}$. 
The resulting interval is
\begin{equation}
    \widehat{C}\lrpar{X_{n+1}} \eqdef \lrcubra{y \in \calY : \, s\lrpar{X_{n+1}, y, \widehat{f}} \leq q_{1-\alpha}(\calS)},
    \label{eq:predInterval}
\end{equation}
where $q_{1-\alpha}(\calS)$ is the empirical $(1-\alpha)$-quantile of $\calS$ for a chosen level $\alpha \in (0,1)$. Assuming no ties occur in $\calS$, SCP achieves marginal coverage \cite{Lei2018}:
\[
1 - \alpha \leq \Prob\lrpar{Y_{n+1} \in \widehat{C}\lrpar{X_{n+1}}} \leq 1 - \alpha + \frac{1}{\#\CalSet + 1}.
\]
Tie-breaking randomization can tighten this bound to equality \cite{Vovk2022}, but that refinement is not essential for our purposes. Different choices of $\widehat{f}$ and $s$ give rise to distinct CP variants, three of which are summarized below.

\textbf{Standard mean regression (SMR)}~ When $\widehat{f}$ is a mean-regression model, the SMR setting \cite{Zaffran2024} adopts the score $s\lrpar{x,y,\widehat{f}} = \absval{y-\widehat{f}(x)}$, yielding
\[
\widehat{C}\lrpar{X_{n+1}} = \lrbra{\widehat{f}(X_{n+1}) \pm q_{1-\alpha}(\calS)}.
\]
It is important to note that the interval width is constant in $X_{n+1}$. This lack of \emph{adaptability} to the newly observed datapoint motivates the adoption of a more adaptive conformity score.

\textbf{Local adaptive conformal prediction (LACP)}~ In \cite{Lei2018}, the authors proposed to scale the residuals by an estimate of local dispersion, using 
$s\lrpar{x,y,\lrpar{\widehat{f}, \widehat{\sigma}}} = \absval{y- \widehat{f}(x)}/\widehat{\sigma}(x)$. Here, $\widehat{f}$ and $\widehat{\sigma}$ are typically models of the conditional mean and conditional dispersion (e.g., standard deviation, mean absolute dispersion), respectively. The prediction intervals adopt the form
\[
\widehat{C}\lrpar{X_{n+1}} = \lrbra{\widehat{f}(X_{n+1}) \pm q_{1-\alpha}(\calS)\,\widehat{\sigma}(X_{n+1})}.
\]
\textbf{Conformalized quantile regression (CQR)}~ 
Proposed by \cite{Romano2019}, CQR differs by using models of lower and upper levels of conditional quantiles. These models are fitted on $\TrSet$, leading to $\widehat{f} = \bigl(\widehat{q}_{\textrm{low}},\widehat{q}_{\textrm{up}}\bigr)$, along with the conformity score $s\lrpar{x,y,\lrpar{\widehat{q}_{\textrm{low}},\widehat{q}_{\textrm{up}}}} = \max\!\bigl\{\widehat{q}_{\textrm{low}}(x)-y,\, y-\widehat{q}_{\textrm{up}}(x)\bigr\}$. The interval is then given by
\[
\widehat{C}\lrpar{X_{n+1}} = \lrbra{\widehat{q}_{\textrm{low}}(X_{n+1}) - q_{1-\alpha}(\calS),\, \widehat{q}_{\textrm{up}}(X_{n+1}) + q_{1-\alpha}(\calS)}.
\]
Although using $(\alpha/2,1-\alpha/2)$ is a standard choice for lower and upper quantile levels (i.e., for $(\widehat{q}_{\textrm{low}}, \widehat{q}_{\textrm{up}})$), other choices also preserve marginal validity \cite{Romano2019}.

%%================================%%
%% 2.2 Feature Attributions
\subsection{Cooperative game theory for feature attribution} \label{subsec:alloc}

A (transferable-utility) cooperative game is a pair $(D,v)$, where $D=\{1,\dots,d\}$ is a set of players and $v : \pset{D} \rightarrow \R$ is the \emph{value function}. The goal of the value function is to quantify the value of each coalition of players. An \emph{allocation} (a.k.a., solution concept or payoff) \cite{Osborne1994} is a mapping $\phi_v : D \rightarrow \R$. An allocation is \emph{efficient} if $\sum_{j \in D} \phi_v(j) = v(D) - v(\emptyset)$, i.e., it redistributes $v(D)$ among the players.

For any coalition $A \in \pset{D}$, the Harsanyi dividend \cite{Harsanyi1963} is defined as $\varphi_v(A) = \sum_{B \subseteq A} (-1)^{|A|-|B|}\,v(B)$ and represents the value created by the interaction among players in $A$. Here, $v$ can be interpreted as the cumulative value of the players, while $\varphi_v$ quantifies the added value of a coalition. The \emph{Harsanyi set of allocations} \cite{Vasilev2001} consists of all mappings of the form
\[
\phi_v(j) = \!\!\sum_{A \in \pset{D} : j \in A} \!\!\lambda_j(A)\,\varphi_v(A),
\quad\text{with}\quad \lambda_j(A) \ge 0, ~ \sum_{j \in D}\lambda_j(A) = 1 ~ \text{and}~ \lambda_j(A) = 0 ~\text{if}~ j \not\in A
\]
where the weight system $\lambda : D \times \pset{D} \rightarrow \R$ parameterizes the family. These allocations can be interpreted as redistributing the dividends back to the players. These allocations are efficient (\Propref{prop:effHarsa}) and generalize many classes of allocations (\Appendixref{adpx:allocs}). In this context, the well-known Shapley value arises as the egalitarian weight system $\lambda_j(A)=1/|A|$, yielding
\begin{equation}
    \Shap_v(j) \eqdef \sum_{A \in \pset{D} : j \in A} \frac{\varphi_v(A)}{|A|},
    \label{eq:shap}
\end{equation}
which divides each dividend equally among coalition members (\Appendixref{adpx:allocs:shap}).

Using $\lambda^{\text{PS}}_j(A) = v\!\lrpar{\lrcubra{j}}\bigl/\!\sum_{i \in A} v\!\lrpar{\lrcubra{i}}$ characterizes the \emph{proportional Shapley values} \cite{Shapley1953, Beal2018},
\begin{equation}
\PShap_v(j) \eqdef \sum_{A \in \pset{D}: j \in A} 
\frac{\absval{v\!\lrpar{\lrcubra{j}}}}{\sum_{j' \in A} \absval{v\!\lrpar{\lrcubra{j'}}}}\,
\varphi_v(A),
    \label{eq:pshap}
\end{equation}
which redistributes dividends in proportion to each of the players' individual values.

Drawing an analogy between players and model features, such allocations have become a cornerstone of feature-importance analysis. Efficiency makes them ideal for decomposing $v(D)$ into feature-level contributions. Specific choices of $v$ tailor the interpretation: model explanation by decomposing a prediction $\widehat{f}(x)$ \cite{Strumbelj2014, NIPS2017_8a20a862, Sundararajan2020}; importance quantification by decomposing the model's variance \cite{Owen2014, Fel2021, Herin2024}; or more advance uncertainty attributions by allocating kernel-based or information-theoretic uncertainty measures \cite{DaVeiga_kernelShap2021, Chau2022, watson2023explaining}.

%%================================%%
%% 3. UnACPI
\section{Uncertainty attribution of conformal prediction intervals}

%%================================%%
%% 3.1 Measures of uncertainty based on prediction intervals
\subsection{Measures of uncertainty based on prediction intervals}

We introduce a regression model-agnostic UA method based on uncertainty measures based on CP. More precisely, we introduce value functions derived from key quantities related to prediction intervals, which can be applied to the SMR, LACP, or CQR methods for constructing CP intervals. Each choice of value function defines a new cooperative game. We then propose to use the Shapley and proportional Shapley allocations of these various games to define novel UA influence measures. The present work focuses on three value functions based on CP intervals (CP interval width, and the two boundary points of the interval). Naturally, these value functions also depend on a data point, for which a prediction is performed. However, it is essential to note that the theoretical results presented in this section will still hold for any value function related to a prediction interval.

Let $\A$ be an algorithm that returns a regression model $\widehat{f}$ trained on $\TrSet$, and let $x$ be an instance of $X$. For every $A \in \pset{D}$, let $\widehat{f}_A$ be the output of $\A$ trained on $\TrSet_A$, and let $x_A$ be the subvector of $X$. Following Section~\ref{sec:scp}, for every $A \in \pset{D}$, let conformity scores $s_A : \calX_A \times \calY \times \calY^{\calX_A} \rightarrow \R$ and let $\calS_A \eqdef \lrcubra{s_A(X_i^{(A)}, Y_i, \widehat{f}_A) : (X_i^{(A)}, Y_i) \in \CalSet_A}$. Then, as in \eqref{eq:predInterval}, let $\widehat{C}_A\lrpar{x^{(A)}}$ be the CP interval related to the datapoint $x^{(A)}$.

\textbf{CP interval width}~ Our first proposed CP-based measure of uncertainty is the width of the CP interval, denoted \(\Delta\widehat{C}(x)\) as in \cite{MEHDIYEV2025110363, Yapicioglu2024}. To that end, we define the following value function:
\begin{equation*}
\forall A \in \pset{D}, ~\forall x^{(A)} \in \calX_A, \quad v_{\text{wCP}}(A, x) := \Delta\widehat{C}_A\lrpar{x^{(A)}}.
\end{equation*} 
Thus, this value function specializes to $v_{\text{wCP}}(A) = 2q_{1-\alpha}(\calS_A)$ for SMR, $v_{\text{wCP}}(A) = 2q_{1-\alpha}(\calS) \widehat{\sigma}(x^{(A)})$ for LACP and $v_{\text{wCP}}(A) = \widehat{q}_{1-\alpha/2}(x^{(A)}) - \widehat{q}_{\alpha/2}(x^{(A)}) + 2q_{1-\alpha}(\calS_A)$ for CQR. For SMR, the width-based feature importance remains the same across new observations.

\textbf{Boundary points}~ On top of the width of the interval, two key quantities would be its lower and upper bounds. For example, when assessing flood risks, one may be more interested in the value of the upper bound of the predicted river water level \cite{Lemaitre2015}. To that extent, we introduce the value functions 
\begin{equation*}
\forall A \in \pset{D}, ~\forall x^{(A)} \in \calX_A, \quad v_{\text{lowCP}}(A) := \inf \widehat{C}\lrpar{x^{(A)}} , \enspace v_{\text{upCP}}(A) := \sup \widehat{C}\lrpar{x^{(A)}}.
\end{equation*}
In the case of the SMR variant, we have $v_{\text{lowCP}}(A) = \widehat{f}(x^{(A)}) - q_{1-\alpha}(\calS_A)$, for LACP, $v_{\text{lowCP}}(A) = \widehat{f}(x^{(A)}) - q_{1-\alpha}(\calS_A)\widehat{\sigma}(x^{(A)})$; and for CQR, $v_{\text{lowCP}}(A) = \widehat{q}_{\alpha/2}(x^{(A)}) - q_{1-\alpha}(\calS_A)$.

\textbf{Normalization and allocations}~ We focus on the two allocations presented in \Secref{subsec:alloc}, the Shapley value \( \text{Shap}_{v_{\omega\text{CP}}}(j) \) and the proportional Shapley value \( \text{P\text{-}Shap}_{v_{\omega\text{CP}}}(j) \), where \( \omega \in \{\text{w}, \text{low}, \text{up} \} \). These two allocations offer desirable interpretative theoretical properties.
\begin{prop}
    For any data point $x \in \calX$, and value function $v_{\omega\text{\normalfont CP}}$, with $\omega \in \{\text{\normalfont w, low, up}\}$,
    $$\sum_{j=1}^d \text{\normalfont Shap}_{v_{\omega\text{\normalfont CP}}}(j,x) = v_{\omega\text{\normalfont CP}}(D,x) - v_{\omega\text{\normalfont CP}}(\emptyset) = \sum_{j=1}^d \text{\normalfont P-Shap}_{v_{\omega\text{\normalfont CP}}}(j,x).$$
    \label{prop:eff:shap:cp}
\end{prop}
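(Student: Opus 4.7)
The plan is to recognize that both allocations appearing in the statement belong to the Harsanyi family described just above the proposition, and therefore inherit efficiency from the general result \Propref{prop:effHarsa}. Because this holds pointwise in $x$ for every choice of value function we have introduced, it is enough to verify that the weight systems defining $\Shap$ and $\PShap$ satisfy the three conditions that characterize a Harsanyi weight system: $\lambda_j(A)\ge 0$, $\sum_{j\in D}\lambda_j(A)=1$, and $\lambda_j(A)=0$ whenever $j\notin A$.

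First I would instantiate the Shapley case by taking $\lambda_j(A)=1/|A|$ for $j\in A$ and $0$ otherwise; the three conditions are immediate, so \eqref{eq:shap} is a Harsanyi allocation and \Propref{prop:effHarsa} gives $\sum_{j=1}^d \Shap_{v_{\omega\text{CP}}}(j,x)=v_{\omega\text{CP}}(D,x)-v_{\omega\text{CP}}(\emptyset)$. Next I would treat the proportional Shapley case by taking $\lambda^{\text{PS}}_j(A)=|v_{\omega\text{CP}}(\{j\},x)|/\sum_{i\in A}|v_{\omega\text{CP}}(\{i\},x)|$ for $j\in A$ and $0$ otherwise: non-negativity and the support condition are obvious, and the sum over $j\in A$ is $1$ by construction, so \eqref{eq:pshap} is a Harsanyi allocation and the same efficiency identity applies. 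Chaining the two identities yields the announced equality.

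The main subtlety is well-posedness of the proportional Shapley weights: the denominator $\sum_{i\in A}|v_{\omega\text{CP}}(\{i\},x)|$ must be nonzero for every $A\ni j$ with $|A|\ge 1$. I would note that this is the standard genericity assumption used in \cite{Beal2018} for defining $\PShap$, and I would state it explicitly (or, equivalently, adopt the convention $\lambda^{\text{PS}}_j(A)=1/|A|$ on the measure-zero set where the denominator vanishes, which leaves the allocation in the Harsanyi set and therefore efficient). For the value functions $v_{\text{wCP}}, v_{\text{lowCP}}, v_{\text{upCP}}$ this condition is generically satisfied because a singleton coalition produces a nontrivial prediction interval of nonzero width and nonzero endpoints almost surely.

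The hardest part is therefore not the algebra but locating the argument correctly: one must be careful that the Harsanyi parametrisation is indexed by the same game $(D,v_{\omega\text{CP}}(\cdot,x))$ in which efficiency is claimed, so that $v(\emptyset)$ on the right-hand side is indeed $v_{\omega\text{CP}}(\emptyset)$ and not some other reference point. Once this bookkeeping is made explicit, the proof reduces to a one-line invocation of \Propref{prop:effHarsa} in each of the two cases, and the middle equality of the stated identity is the common value $v_{\omega\text{CP}}(D,x)-v_{\omega\text{CP}}(\emptyset)$ shared by the two allocations.
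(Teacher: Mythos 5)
Your proof is correct and follows essentially the same route as the paper, which also reduces the statement to a one-line invocation of Proposition~\ref{prop:effHarsa} after noting that both allocations lie in the Harsanyi set. Your additional care about the well-posedness of the proportional Shapley weights (nonvanishing denominator $\sum_{i\in A}\absval{v_{\omega\text{CP}}(\{i\},x)}$) is a legitimate refinement that the paper's proof leaves implicit, but it does not change the argument.
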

Since both the Shapley and proportional Shapley values are efficient, considering two allocation schemes enables comparing their respective rankings. In practice, agreement between rankings produced by the two allocations enhances confidence in the results, as it signals a lack of sensitivity to the allocation choice. Meanwhile, disagreement can indicate that the situation is more complex and requires more attention. Moreover, the value functions can be normalized \wrt the baseline \( v_{\omega\text{CP}}(\emptyset) \), where \( \omega \in \{\text{w}, \text{low}, \text{up} \} \). Normalized value functions can be written, for $x \in \calX$, as 
\[
\forall A \in \pset{D}, ~\forall x^{(A)} \in \calX_A \quad \Tilde{v}_{\omega\text{CP}}(A, x^{(A)}) := \frac{v_{\omega\text{CP}}(A, x^{(A)}) - v_{\omega\text{CP}}(\emptyset)}{v_{\omega\text{CP}}(D, x) - v_{\text{CP}}(\emptyset)}.
\]
Hence, by leveraging \Propref{prop:eff:shap:cp}, normalizing the value functions implies that the resulting allocations will sum up to $1$. However, since there are no general theoretical guarantees on the monotonicity of the conformal scores of nested models, the allocations may fall outside the interval \([0, 1]\), but still sum to $1$. 

%%================================%%
%% 3.2 Estimation schemes
\subsection{Estimation and approximations}\label{sec:mc-approx}
This section presents an exact and a Monte Carlo-based approximation scheme to compute the Shapley and proportional Shapley values. The presented procedures are the same for any CP-based value function.

\textbf{Exact computations}~ The procedure to compute the exact Shapley and proportional Shapley values is fairly straightforward, and can be broken down into two steps. The first step requires evaluating the value functions. Thus, for every subset of features $A \in \pset{D}$, we must train the corresponding model $\widehat{f}_A$ using algorithm $\A$ on $\TrSet_A$, and then compute the conformity scores $\calS_A$ using $\CalSet_A$. As a whole, we need to train $2^d$ models and compute their conformity scores. Once we have access to the conformity scores, for a new data point $x \in \calX$, computing the prediction intervals $\widehat{C}_A(x^{(A)})$ for every $A \in \pset{D}$ allows extracting the evaluation of the value functions (e.g., width or boundary points of the interval). The second step entails aggregating the $2^d$ evaluated value functions to return either the Shapley or proportional Shapley values according to \eqref{eq:shap} or \eqref{eq:pshap}, respectively. This procedure is detailed in Algorithm~\ref{alg:UnACPI-1}.

\begin{algorithm}[ht!]
\caption{Exact computation procedure}\label{alg:UnACPI-1}
\begin{algorithmic}[1]
    \Require Data $\mathcal{D}=\{(X_i, Y_i)\}_{i=1}^{n}$, new data $\mathcal{D}^\text{New}=\{(X_i, Y_i)\}_{i=n+1}^{m}$, miscoverage level $\alpha\in(0,1)$, regression algorithm $f$, conformity score algorithm $s$, and a weight assignment $\lambda: \mathcal{P}_D \to \mathbb{R}$ that associates a weight $\lambda(A)$ to each subset $A \subseteq D$, where $D = \{1, \dots, d\}$ is the set of variable indices in $X$.
    %\Ensure Output description
    \State Randomly split $\mathcal{D}$ into two disjoint datasets $\mathcal{D}^{\text{Tr}}$ and $\mathcal{D}^{\text{Cal}}$
    \For{$A \in \mathcal{P}_D$}
        \State Define $\mathcal{D}^{\text{Tr}}_A = \{(X_{i,A}, Y_i)\}_{(X_i, Y_i) \in \mathcal{D}^{\text{Tr}}}$
        \State Define $\mathcal{D}^{\text{Cal}}_A = \{(X_{i,A}, Y_i)\}_{(X_i, Y_i) \in \mathcal{D}^{\text{Cal}}}$
        \State Define $\mathcal{D}^{\text{New}}_A = \{X_{i,A}\}_{X_i \in \mathcal{D}^{\text{New}}}$
        \State Train $\hat{f}_A$ on $\mathcal{D}^{\text{Tr}}_A$
        %\If Add Train \hat{\sigma}
        %\EndIf
        \State Compute conformity scores $\hat{s}_i$ for each $(X_i, Y_i) \in \mathcal{D}^{\text{Cal}}_A$
        \For{$X_i \in \mathcal{D}^{\text{New}}_A$}
            \State Compute conformal prediction interval $C_A(X_i)$
            \State Compute associated value $v(A; X_i)$
        \EndFor
    \EndFor
    % After this, we have $d$ values for each new observation
    % Then, we need to distribute the total gain to each feature
    \State Initialize matrix $\Phi \in \mathbb{R}^{m \times d}$ with zeros
    \For{$X_i \in \mathcal{D}^{\text{New}}$}
        \For{$j \in D$}
            \State Compute $\phi_i(j)$
            \State Store $\phi_i(j)$ in $\Phi_{i,j}$
        \EndFor
    \EndFor
    \State \Return $\Phi$
\end{algorithmic}
\end{algorithm}

\textbf{Approximation procedure}~ Overall, the bulk of the computational strain lies in training the models. A solution to drive the cost down would be to reduce the number of models to train from $2^d$ to a more controllable quantity. To address this, several strategies have been proposed in the literature. \cite{Rabitz1999,Li2001} proposed removing coalitions whose cardinality is above a certain threshold based on heuristics, \cite{Jethani2022} proposed to randomly sample according to a distribution over the coalitions, or \cite{Strumbelj2014, Song2016} proposed a Monte Carlo-type sampling of the permutations of players for the Shapley values. We generalize the latter approach to a broad class of allocations to produce an approximation scheme with statistical guarantees (see Theorems~\ref{thm:cvgmontecarlo} and \ref{thm:cvgimpsampling}).

The Shapley and proportional Shapley values are part of a class of allocations known as the Weber set \cite{Weber_1988} (see Appendix~\ref{adpx:allocs:shap}), which relies on the ordering of players. By randomly sampling $m$ of the $d!$ possible orderings, we require, at worst, $m \times d$ models to train, bypassing the exact computations' exponential scaling. The way the sampling is done dictates the allocation being approximated. For instance, a uniform sample of the permutations approximates the Shapley values. For the proportional values, a different distribution is needed (see Appendix~\ref{app:subsec:pshap:approx}). The complete approximation procedure is described in Algorithm~\ref{alg:WeberPermut}, with extensive details and explanations in Appendix~\ref{adpx:allocs}.
\begin{thm}[Statistical properties of Algorithm~\ref{alg:WeberPermut}]
\label{thm:cvgmontecarlo}
For any value function $v$, and any $j \in D$ the approximations \( \widehat{\text{Shap}_v}(j) \) and \( \widehat{\text{P-Shap}_v}(j) \) using Algorithm~\ref{alg:WeberPermut} are unbiased, strongly consistent, and asymptotically normal estimators of $\Shap_v(j)$ and $\text{P-Shap}_v(j)$, respectively.
\end{thm}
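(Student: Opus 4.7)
The plan is to exploit the fact that every allocation in the Weber set (hence both $\Shap_v$ and $\PShap_v$) admits a representation as an expectation with respect to a probability measure on the symmetric group $S_d$. Concretely, for a permutation $\pi \in S_d$ let $P_\pi(j)$ denote the set of predecessors of $j$ in $\pi$, and let $\Delta_\pi v(j) := v(P_\pi(j)\cup\{j\}) - v(P_\pi(j))$ be the marginal contribution of $j$ along $\pi$. Then $\Shap_v(j) = \mathbb{E}_{\pi \sim \mathcal{U}(S_d)}[\Delta_\pi v(j)]$, and, as will be established in Appendix~\ref{app:subsec:pshap:approx}, $\PShap_v(j) = \mathbb{E}_{\pi \sim \mu^{\mathrm{PS}}}[\Delta_\pi v(j)]$ for a (possibly $j$-dependent) distribution $\mu^{\mathrm{PS}}$ on $S_d$, or equivalently as an importance-weighted expectation under the uniform law. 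Algorithm~\ref{alg:WeberPermut} draws i.i.d.\ permutations $\pi_1,\dots,\pi_m$ from the appropriate sampling law and returns the empirical average of the corresponding summands, so each estimator is an i.i.d.\ sample mean.

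First I would establish unbiasedness. For the Shapley case this is immediate: by linearity of expectation,
\[
\mathbb{E}\bigl[\widehat{\Shap_v}(j)\bigr] = \frac{1}{m}\sum_{k=1}^{m}\mathbb{E}[\Delta_{\pi_k}v(j)] = \Shap_v(j).
\]
For the proportional case, I would verify that the weights used in Algorithm~\ref{alg:WeberPermut} induce the permutation law $\mu^{\mathrm{PS}}$ (or equivalently the correct importance-sampling ratio), so that each sampled summand has mean $\PShap_v(j)$; unbiasedness of the average then again follows from linearity.

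Next I would derive strong consistency and asymptotic normality as standard consequences of the i.i.d.\ structure. The crucial integrability observation is that, since $D$ is finite, $v$ takes only $2^d$ values and therefore $|\Delta_\pi v(j)| \le 2\max_{A}|v(A)| < \infty$, so the summands are bounded, hence in $L^2$. Kolmogorov's Strong Law of Large Numbers immediately yields almost-sure convergence of $\widehat{\Shap_v}(j)$ and $\widehat{\PShap_v}(j)$ to their targets. The Lindeberg–Lévy Central Limit Theorem then gives
\[
\sqrt{m}\bigl(\widehat{\phi}_v(j) - \phi_v(j)\bigr) \xrightarrow{d} \mathcal{N}\bigl(0, \sigma^2_{\phi,j}\bigr),
\]
where $\sigma^2_{\phi,j}$ is the variance of the (importance-weighted) marginal contribution under the relevant sampling law, which is finite by the bound above.

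The main obstacle is not in the probabilistic arguments, which are routine once the representation is fixed, but in justifying the permutation-based expectation formula for $\PShap_v$: the proportional weights $\lambda^{\mathrm{PS}}_j(A)$ depend on the coalition $A$ rather than only on its cardinality, so rewriting \eqref{eq:pshap} as an expectation over orderings requires identifying the right distribution $\mu^{\mathrm{PS}}$ (or the correct importance weight) and checking that the sampling scheme in Algorithm~\ref{alg:WeberPermut} implements it exactly. Once this representation is in place, unbiasedness, the SLLN, and the CLT apply uniformly to both estimators and close the proof.
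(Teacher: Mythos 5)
Your proposal follows essentially the same route as the paper's proof: both recast the allocation as an expectation $\mathbb{E}_p[h(\pi)]$ over a random-order distribution on $\calS_D$ (uniform for Shapley, the distribution of Appendix~\ref{app:subsec:pshap:approx} for proportional Shapley), identify the estimator as an i.i.d.\ sample mean, and invoke linearity of expectation, the strong law of large numbers, and the central limit theorem. Your observation that finiteness of $D$ makes the summands bounded — so the moment condition the paper assumes in its formal statement holds automatically (modulo non-degeneracy of the variance for the CLT) — is a small but valid sharpening; otherwise the arguments coincide.
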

Moreover, it is possible to reweigh already sampled permutations to produce estimates for both the Shapley and proportional Shapley, without having to run Algorithm~\ref{alg:WeberPermut} again. This procedure relies on an importance sampling (IS) scheme detailed in Appendix~\ref{app:subsec:impsampling}.
\begin{thm}[Statistical properties of the IS approximation]
\label{thm:cvgimpsampling}
For any value function $v$, and any $j \in D$ the approximations \( \widehat{\text{Shap}}_{v,\textrm{IS}}(j) \) and \( \widehat{\text{P-Shap}}_{v,\textrm{IS}}(j) \) using the importance sampling scheme described in Appendix~\ref{app:subsec:impsampling} are unbiased, strongly consistent, and asymptotically normal estimators of $\Shap_v(j)$ and $\text{P-Shap}_v(j)$, respectively.
\end{thm}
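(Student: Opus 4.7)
The plan is to reduce the statement to the classical importance-sampling identity on the finite permutation group $\mathfrak{S}_d$, and then apply the law of large numbers and the central limit theorem in their standard form for i.i.d.\ bounded random variables. First I would recall from the Weber-set representation that, for each $j \in D$ and each value function $v$, the marginal contribution of $j$ along a permutation $\sigma \in \mathfrak{S}_d$, denoted $\Delta_j v(\sigma)$, satisfies $\Shap_v(j) = \mathbb{E}_{\sigma \sim p_{\mathrm S}}[\Delta_j v(\sigma)]$ and $\PShap_v(j) = \mathbb{E}_{\sigma \sim p_{\mathrm{PS}}}[\Delta_j v(\sigma)]$, where $p_{\mathrm S}$ is the uniform distribution on $\mathfrak{S}_d$ and $p_{\mathrm{PS}}$ is the permutation distribution singled out in Appendix~\ref{app:subsec:pshap:approx} so that proportional dividends are recovered. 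The IS scheme of Appendix~\ref{app:subsec:impsampling} reuses samples $\sigma_1,\dots,\sigma_m$ drawn i.i.d.\ from one of these proposals, say $q$, and defines the weights $w(\sigma) = p(\sigma)/q(\sigma)$ so that the target expectation under $p$ is recovered.

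With this setup the two estimators both take the form
\[
\widehat{\phi}_{v,\mathrm{IS}}(j) \;=\; \frac{1}{m}\sum_{i=1}^{m} w(\sigma_i)\,\Delta_j v(\sigma_i),
\qquad \sigma_1,\dots,\sigma_m \stackrel{\mathrm{iid}}{\sim} q,
\]
with $(w,p)$ chosen to match $\Shap_v$ or $\PShap_v$. Unbiasedness is then the one-line identity $\mathbb{E}_q[w(\sigma)\Delta_j v(\sigma)] = \sum_{\sigma} q(\sigma)\tfrac{p(\sigma)}{q(\sigma)}\Delta_j v(\sigma) = \mathbb{E}_p[\Delta_j v(\sigma)]$. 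Because $\mathfrak{S}_d$ is finite, both $w$ and $\Delta_j v$ are bounded, hence the summands are i.i.d.\ bounded (and in particular square-integrable) random variables. Kolmogorov's strong law of large numbers then gives almost sure convergence to $\mathbb{E}_p[\Delta_j v(\sigma)]$, which is exactly $\Shap_v(j)$ or $\PShap_v(j)$ depending on the choice of $(w,p)$, yielding strong consistency. The classical central limit theorem applied to the same i.i.d.\ sequence delivers asymptotic normality with limiting variance $\mathbb{V}_q\bigl(w(\sigma)\Delta_j v(\sigma)\bigr)$, which is finite for the same reason.

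The only non-routine point to check is the absolute-continuity condition $q(\sigma)>0$ whenever $p(\sigma)\Delta_j v(\sigma)\neq 0$, without which the ratio $p/q$ would fail to be well-defined on the relevant part of the support. Since Appendix~\ref{app:subsec:pshap:approx} constructs $p_{\mathrm{PS}}$ via weights $\lambda_j^{\mathrm{PS}}$ derived from $|v(\{i\})|$ and the uniform $p_{\mathrm S}$ trivially charges every permutation, this absolute continuity holds as soon as $v(\{i\})\neq 0$ for all $i \in D$, a condition already implicit in the very definition of $\PShap_v$ in \eqref{eq:pshap}; otherwise a harmless convention (zero weight on degenerate coordinates) restores it. This compatibility check between the two proposal distributions is the main obstacle; once it is secured, the three conclusions follow from standard arguments and the previously established properties of the sampler in Theorem~\ref{thm:cvgmontecarlo}.
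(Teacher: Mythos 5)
Your proof follows essentially the same route as the paper's: both recognize the estimator as the classical importance-sampling estimator of $\mathbb{E}_{p'}[h(\pi)]$ over the finite set $\calS_D$, and invoke linearity of expectation for unbiasedness, the strong law of large numbers for strong consistency, and the central limit theorem for asymptotic normality. You are in fact somewhat more careful than the paper on two points it leaves implicit: you observe that finiteness of the permutation group makes the square-integrability hypothesis automatic (the paper instead assumes $0<\mathbb{E}_{p}[(p'(\pi)h(\pi)/p(\pi))^2]<\infty$ as a hypothesis), and you explicitly verify the absolute-continuity condition that the proposal must charge every permutation on which $p'(\sigma)\,\Delta_j v(\sigma)\neq 0$ --- a requirement the paper's proof does not state, although the positivity of the proportional-Shapley random-order distribution (and the convention for zero individual values) is handled separately in Appendix~\ref{app:subsec:pshap:approx}.
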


%%===========================================================%%
%% 4. Experiments
\section{Experiments}\label{sec:experiments}
Further information on the datasets, data-generating processes, and supplemental experiments appears in \Appendixref{sec:addExps}. Reproducible codes, datasets, experiments, and figures are available in the accompanying GitHub repository\footnote{The contents of the GitHub repository are made available for review as a supplementary zip file.}.
% \footnote{\href{https://github.com/milidris/UnACPI}{https://github.com/milidris/UnACPI}}

%%===========================================================%%
%% 4.1 Sobol'-Levitan
\subsection{Empirical convergence of the approximations and runtime gains}\label{sec:sobol-levitan}

Beyond the theoretical guarantees in \Secref{sec:mc-approx}, we examine how the Monte Carlo approximation behaves as the number of sampled permutations increases. We adopt a modified Sobol'–Levitan benchmark \cite{SobolLevitan99, Moon2012, simulationlib} where $X = (X_1, \dots, X_{16})^\top \sim \calU(0,1)^{\times16}$ is comprised of mutually independent random variables, with target
$$Y = \exp \lrbra{ \beta^\top X} + \prod_{i=1}^{16}\frac{\exp\lrbra{\beta_i} - 1}{ \beta_i} + \epsilon_Y, \enspace \text{where }~\beta \in \R^{16}, \text{ and } \epsilon_Y \sim \calN(0,1).$$
A sampled dataset of size $1,000$ is randomly split $80\% - 20\%$ for training and calibration, respectively. We decompose the CP interval width using the SMR method on linear models on $50$ test samples using the Shapley values as a baseline. For permutation counts $m\in\{100,200,\dots,5,000\}$ we perform $150$ repetitions of the Monte Carlo approximations, recording the estimated Shapley values, the number of fitted linear models, and the effective runtime\footnote{Single-core runs on an AMD Ryzen PRO 4750U with 32 GB RAM, \texttt{R} 4.2.1.}. \Figref{fig:mc-conv} (and the full curves in \Secref{apdx:results-sobolativan}) show convergence toward the exact values (Panel A), a model count far below the worst-case $m\times d$ (Panel B), and, for example, a ten-fold speed-up at $m=1,000$ (Panel C).
\begin{figure}[t!]
    \centering
    \includegraphics[width=\linewidth]{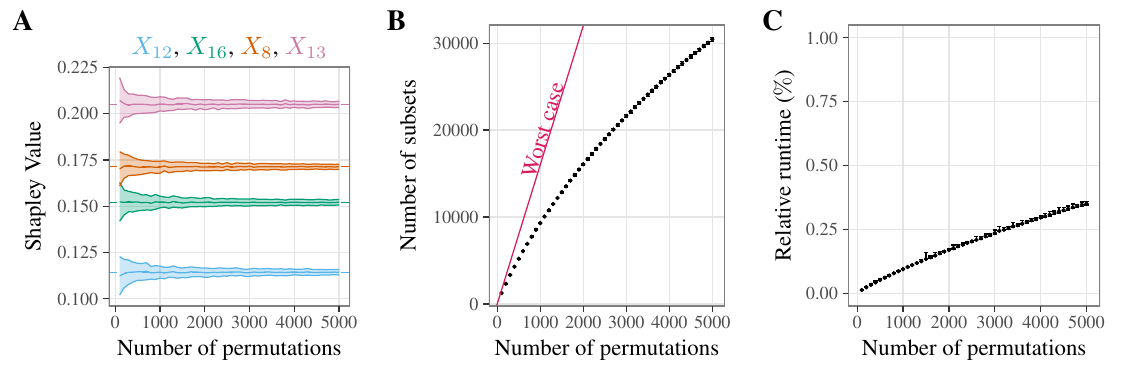}%
    \caption{Empirical convergence for the four most important features (\textbf{A}), number of trained models and worst case scenario $m \times d$ (\textbf{B}), and runtime relative to exact computations (\textbf{C}) for the empirical study of the Monte Carlo approximation scheme}\label{fig:mc-conv}
\end{figure}
%%===========================================================%%
%% 4.2 Friedman
\subsection{Feature selection beyond moment-dependent importance}\label{sec:friedman}
We next compare importance rankings based on CP intervals with traditional rankings derived from the conditional mean (SHAP \cite{NIPS2017_8a20a862}) and conditional variance. Using a variant of Friedman's benchmark \cite{Friedman1991}, as in \cite{watson2023explaining}, let $X=(X_1,\dots,X_{11})^\top\sim\calU(0,1)^{\times11}$ and define  
\[
\begin{aligned}
V &= 10\sin(\pi X_1X_2)+20(X_3-0.5)^2+10X_4+5X_5+\epsilon_V,\quad\epsilon_V\sim\calN(0,1),\\
Z &= 10\sin(\pi X_6X_7)+20(X_8-0.5)^2+10X_9+5X_{10}+\epsilon_Z,\quad\epsilon_Z\sim\calN(0,1),
\end{aligned}
\]
and the target variable is distributed according to $Y \sim \calN(Z, V)$. Thus $(X_6,\dots,X_{10})$ govern the mean, $(X_1,\dots,X_5)$ the heteroskedasticity, and $X_{11}$ is noise.

We generate a training set of size $10,000$, and calibration and test sets of size $5,000$ each. We fit LightGBM models, with 30 trees each with a maximum depth of 10 leaves. On the training data, we fit models of the conditional mean, the conditional variance (using the squared residuals), and along with the calibration data, we compute the LACP and CQR (using the pinball loss with levels equal to $0.1$ and $0.9$) intervals with coverage of $99\%$ ($\alpha = 0.01$). We then compute the Shapley values decompositions on the test set, resulting in \Figref{fig:res-friedman}. We can notice that both LACP and CQR uncertainty measures based on width are between the conditional mean and the conditional variance. This demonstrates that these quantile-based measures do go beyond typical moment-based importance measures to quantify importance, effectively adding a different interpretative layer to the XAI practitioner's toolbox.
\begin{figure}[ht!]
    \centering
    \includegraphics[width=\linewidth]{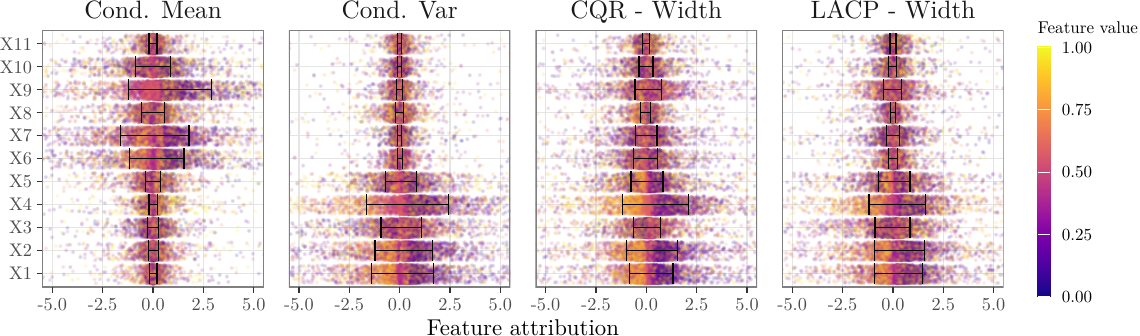}
    \caption{Feature attribution for the modified Friedman example. The bars mark the 90\% intervals.}
    \label{fig:res-friedman}
\end{figure}

%%===========================================================%%
%% 4.3 Real-world datasets
\subsection{Real-world datasets}
We conduct experiments on real-world classical datasets for CP \cite{Romano2019,xie2024boosted}, detailed in Table~\ref{tab:datasets}. They are chosen for their diversity in size, dimensionality, and data types. $20\%$ of each dataset forms a held-out test set; the remainder is split $80\!:\!20$ into training and calibration sets, respectively. We use either exact computations or Monte Carlo approximations depending on the dataset.

\begin{table}[ht!]
    \centering
    \begin{tabular}{l|C{2.3in}|c|c|c|r}
        \hline
        \textbf{Dataset} & \textbf{Description} & $n$ & $d$ & Estimation &\textbf{Source} \\
        \hline
        \texttt{bike}       & Bike rental data                              & 17,379    & 12    & Exact  &\cite{BikeSharingData}    \\
        \texttt{blog}       & Number of comments per blog posts             & 52,397    & 238   & $m=50$ &\cite{BlogData}           \\
        \texttt{casp}       & Physicochemical properties of proteins        & 45,730    & 9     & Exact  &\cite{PhysicoData}        \\
        \texttt{concrete}   & Concrete compressive strength                 & 1,030     & 8     & Exact  &\cite{concreteData}       \\
        \texttt{facebook}   & Engagement of facebook posts                  & 79,788    & 37    & $m=200$&\cite{facebookData}       \\
        \texttt{UScrime}    & Crime data in the US                          & 1,993     & 101   & $m=200$&\cite{crimeData}          \\
        \texttt{star}       & Effect of reducing class size on test scores  & 2,161     & 38    & $m=200$&\cite{starData}          \\
        \hline
    \end{tabular}
    \caption{Real world dataset line-up for realistic experiments}
    \label{tab:datasets}
\end{table}
\textbf{Models and hyperparameters}~ For SMR and LACP we fit linear regression (LR), LightGBM (LGB), and random-forest (RF) models; for CQR we use quantile LR (Q-LR) and quantile RF (Q-RF).\footnote{\texttt{R} Packages: \href{https://search.r-project.org/R/refmans/stats/html/00Index.html}{\texttt{stats}}, \href{https://CRAN.R-project.org/package=lightgbm}{\texttt{lightgbm}}, \href{https://CRAN.R-project.org/package=randomForest}{\texttt{randomForest}}, \href{https://CRAN.R-project.org/package=quantreg}{\texttt{quantreg}}, and \href{https://CRAN.R-project.org/package=quantregForest}{\texttt{quantregForest}}, respectively.} All $\widehat{\sigma}$ models in LACP use LGB. Unless specified, package defaults are retained. For RF and Q-RF we set the minimum node size to $20\%$ of training cases, limit the number of trees to $75$, and choose $\texttt{mtry} =\sqrt{d}$. LGB uses $100$ boosting rounds, except for \texttt{UScrime}, \texttt{star}, and \texttt{blog}, where $25$ rounds are adopted. Q-LR employs the Frisch–Newton algorithm with default machine precision.

\textbf{Selected results}~ We present a few selected interesting results. We refer the interested reader to Appendix~\ref{sec:addExps} for more details on the remainder of the visualizations. First, we showcase how our approach differs from standard approaches regarding importance rankings. \Figref{fig:concrete-ellipses} contrasts importance rankings based on interval width (LACP) with those based on the conditional mean (SHAP) for RF and LGB on the \texttt{concrete} data. The center of the ellipses represents the mean ranks over the test set. Uncertainties, are represented in the shape of the ellipses. Vertical and horizontal elongations are proportional to the ranking standard deviation for the conditional mean and the CP interval width, respectively. One can notice that our proposed approach does not yield the same importance rankings as the conditional mean-based feature attribution. This is explained by the CP framework's reliance on quantiles, which are much more general than moment-based statistics, as already demonstrated in \Secref{sec:friedman}.
\begin{figure}[ht!]
    \centering
    \includegraphics[width=\linewidth]{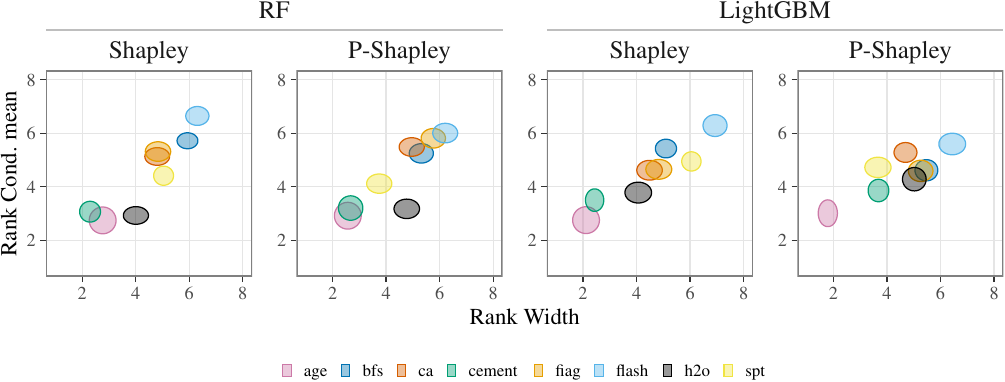}
    \caption{LACP width-based vs. conditional mean-based importance rankings for RF and LGB models on the \texttt{concrete} dataset.}
    \label{fig:concrete-ellipses}
\end{figure}
\Figref{fig:facebook-topranks} depicts rank frequencies for (absolute value of) the Shapley and proportional Shapley allocations of the upper bound (CQR) on the \texttt{facebook} dataset. Panel~(\textbf{A}) shows the complete rank distributions, while Panel~(\textbf{B}) highlights the most frequent top-ranked features. One can notice that, on the \texttt{facebook} dataset, the allocation choice can impact the features' overall ranking. Unchanged rankings can indicate stability (and thus confidence) between the allocations. In contrast, the multiplicities of differing rankings enable a more complete depiction of the importance, since a choice of allocation only tells one side of the story. Moreover, the overall feature ranking also depend on the model choice. Not only the rankings differ between Q-RF and Q-LR models, but also the frequencies at which these rankings are observed. This indicates a more nuanced and locally diverse definition of uncertainties.
\begin{figure}[ht!]
    \centering
    \includegraphics[width=\linewidth]{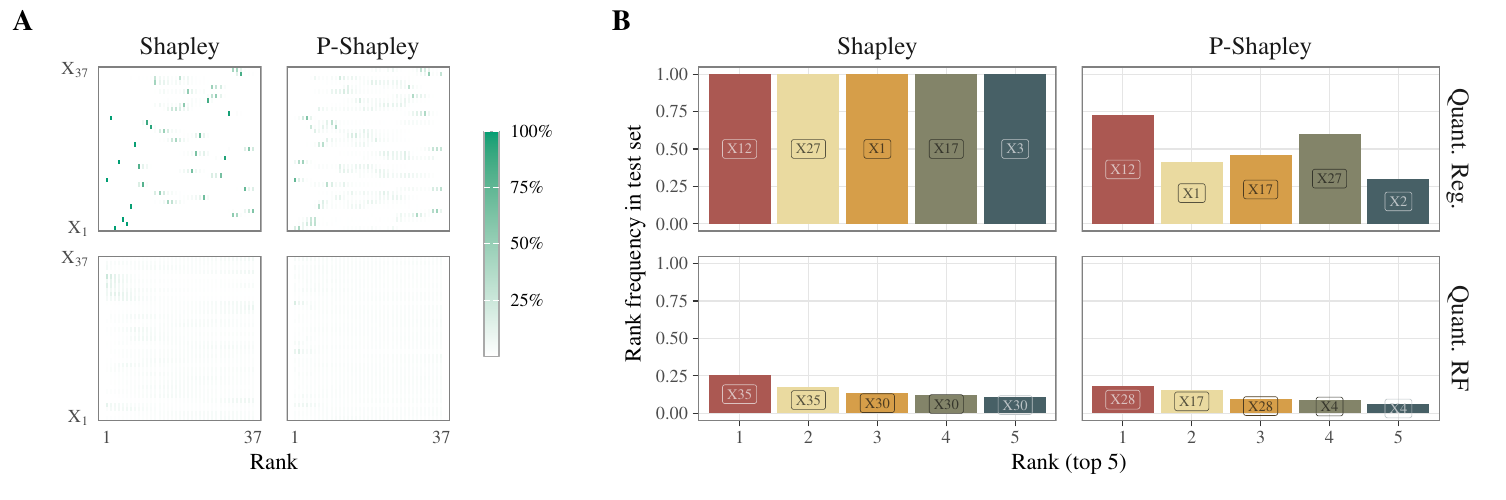}
    \caption{Rank frequency of all the features (\textbf{A}) and top 5 most important feature (\textbf{B}) over the test data. CQR upper bound-based importance rankings for Q-LR and Q-RF models on the \texttt{facebook} dataset are used.}
    \label{fig:facebook-topranks}
\end{figure}
%%===========================================================%%
%% 5. Conclusion
\section{Conclusion}
We proposed a regression-model-agnostic uncertainty attribution (UA) framework that combines cooperative-game theory with conformal prediction (CP) to interpret predictive uncertainty feature-wise. By defining cooperative games whose value functions encode CP-interval properties, we extend attribution beyond mean predictions and offer insights suited to high-stakes settings. Building on the Shapley value, we employed the broader Harsanyi allocation family and highlighted proportional Shapley values, which distribute uncertainty proportionately to individual feature contributions. We introduced a Monte Carlo approximation with proven unbiasedness, consistency, and asymptotic normality to curb the high computational cost, making large-scale applications feasible. Experiments on synthetic benchmarks and varied real-world datasets showed that CP-based uncertainty attributions diverge markedly from moment-based rankings, illustrating our method's added value. Future work includes automatically defining optimal allocations specialized for XAI tasks, exploring other venues to link XAI and the CP framework (e.g., beyond SCP, CP intervals for time series or sets) \cite{Zaffran2024} and the statistical study and adaptation of the approximation schemes based on surrogate models \cite{NIPS2017_8a20a862, Jethani2022}.

%%===========================================================%%
%% References
\bibliography{biblio}

%%===========================================================%%
%% Appendices
%%===========================================================%%
\newpage
\appendix

%%===========================================================%%
%% Allocations
\section{Sets of Allocations and Monte Carlo Approximations}
\label{adpx:allocs}

In this appendix, cooperative games are understood as a tuple $(D,v)$ where $D$ is a finite set of players, and $v : \pset{D} \rightarrow \R$ is a set function called the value function of the cooperative game. An allocation is a set function $\phi_v : D \rightarrow \R$ related to a cooperative game $(D,v)$. 

\subsection{Equivalent Formulations of the Shapley Values and Corresponding Sets of Allocations}
\label{adpx:allocs:shap}
The Shapley values $\Shap_v : \pset{D} \rightarrow \R$ of a cooperative game $(D,v)$ are a very well-known allocation. It has become popular as the only allocation that respects the four axioms:
\begin{itemize}
    \item Efficiency: $\sum_{i\in D} \Shap_v(i) = v(D) - v(\emptyset)$;
    \item Symmetry: for $i,j \in D$, if for every $A \in \pset{D}$, $v(A \cup \{i\}) = v(A \cup \{i\})$, then $\Shap_v(i) = \Shap_v(j)$;
    \item Sensitivity (or Dummy/Null player): for $i \in D$, if for every $A \in \pset{D}$, $v(A \cup \{i\}) = v(A)$, then $\Shap_v(i) = 0$;
    \item Linearity (or Additivity): for two cooperative games $(D,v)$ and $(D,v')$, $\Shap_{v+v'} = \Shap_v + \Shap_{v'}$.
\end{itemize}
Shapley found that the resulting allocation admits an analytical formula. The original formulation writes \cite{Shapley1951}
\begin{equation}
    \forall j \in D, \enspace \Shap_v(j) = \sum_{A \in \mathcal{P}_D ~ : ~ j \notin A} \frac{|A|!(d-|A|-1)!}{d!} \left[v(A \cup \{j\}) - v(A)\right].
    \label{eq:shapOriginal}
\end{equation}

\begin{rmk}[Shapley values and model explanations]
    The Shapley values have been extensively used in the XAI literature for ``model explanations'' \cite{Strumbelj2014, NIPS2017_8a20a862}. For a model $\widehat{f} : \calX \rightarrow \calY$ %learned from an algorithm $\A$, 
    and an observation $x \in \calX$, it amounts to choosing a value function such that $v(D) = \widehat{f}(x)$ (see e.g., \cite{Sundararajan2020} for an overview). It has become particularly popular since, for such value functions, thanks to the efficiency property, it allows writing
    \begin{equation*}
        \widehat{f}(x) - \mathbb{E}\left[\widehat{f}(x)\right] = \sum_{j \in D} \Shap_v(j),
    \end{equation*}
    where the summands are interpreted as parts of the prediction attributed to each feature of the learned model.
\end{rmk}
With advances in the theoretical study of cooperative games, different sets of efficient allocations have been introduced. For the purposes of this article, two of them are introduced: the Weber and Harsanyi sets. They are particularly interesting since it has been shown that the Shapley values are part of these sets of allocations, resulting in formulas equivalent to \Eqref{eq:shapOriginal}, allowing for finer interpretations of the allocations, going beyond the axiomatic characterizations.

\paragraph{Shapley Values as Uniform Choice Over Random Orders: the Weber Set} The Weber set of allocations of a cooperative game $(D,v)$ can be understood from a random order perspective. Let $\calS_D$ be the set of permutations of $D$ (without replacement). For an ordering $\pi = (\pi_1, \dots, \pi_d) \in \calS_D$ and for a any player $j\in D$, let $\pi(j)$ be the position of player $j$ in the ordering $\pi$ (i.e., $\pi_{\pi(j)}=j$), and let $\pi^j$ be the set of players preceding $j$ in $\pi$, including $j$ (i.e., $\pi^j \eqdef \lrcubra{i \in D: \pi(j) \leq \pi(i)}$). Let $p : \calS_D \rightarrow [0,1]$ be a probability mass function over $\calS_D$, called the random order distribution. The Weber set \cite{Weber_1988} is the set of allocations, parametrized by $p$, that can be written as
\begin{equation}
    \forall j \in D, \enspace \phi_v(j) = \sum_{\pi \in \calS_D} p(\pi) \lrbra{v\lrpar{\pi^j} - v\lrpar{\pi^j \setminus \lrcubra{j}}} = \mathbb{E}_{p} \lrbra{v\lrpar{\pi^j} - v\lrpar{\pi^j \setminus \lrcubra{j}}},
    \label{eq:webAllocations}
\end{equation}
which can be interpreted as the expectation \wrt $p$ of the marginal contributions of each player in all the possible orderings.
\begin{prop}[Efficiency of the Weber set]
    Allocations of the form of \Eqref{eq:webAllocations} are efficient.
    \label{prop:effWeber}
\end{prop}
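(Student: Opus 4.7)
The plan is to show efficiency by direct computation, exchanging the order of summation and exploiting a telescoping structure inherent to orderings. I would start from the definition
\[
\sum_{j \in D} \phi_v(j) = \sum_{j \in D} \sum_{\pi \in \calS_D} p(\pi)\lrbra{v(\pi^j) - v(\pi^j \setminus \{j\})},
\]
and since $\calS_D$ and $D$ are finite, swap the two sums to move the $p(\pi)$ factor outside. This reduces the problem to showing that for every fixed permutation $\pi \in \calS_D$, the inner sum over players equals $v(D) - v(\emptyset)$.

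Next, I would re-index the inner sum by the position of the player in $\pi$. For $\pi = (\pi_1,\dots,\pi_d)$, by definition $\pi^{\pi_k} = \{\pi_1,\dots,\pi_k\}$ and $\pi^{\pi_k}\setminus\{\pi_k\} = \{\pi_1,\dots,\pi_{k-1}\}$ (with the convention that the empty prefix corresponds to $\emptyset$). Hence
\[
\sum_{j \in D} \lrbra{v(\pi^j) - v(\pi^j\setminus\{j\})} = \sum_{k=1}^{d} \lrbra{v(\{\pi_1,\dots,\pi_k\}) - v(\{\pi_1,\dots,\pi_{k-1}\})},
\]
which is a telescoping sum equal to $v(\{\pi_1,\dots,\pi_d\}) - v(\emptyset) = v(D) - v(\emptyset)$, independently of $\pi$.

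Plugging this back and using that $p$ is a probability mass function on $\calS_D$ (so $\sum_{\pi} p(\pi) = 1$) yields $\sum_{j \in D} \phi_v(j) = v(D) - v(\emptyset)$, which is efficiency. There is no real obstacle here: the only subtlety worth being careful about is the bookkeeping in the re-indexing step (making sure that $\pi^j$ includes $j$ itself, so that removing $j$ gives exactly the strict prefix, and that $k=1$ produces the $v(\emptyset)$ term). Everything else is a routine exchange of finite sums.
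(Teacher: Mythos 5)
Your proof is correct and follows essentially the same route as the paper's: exchange the finite sums, observe that for each fixed permutation the sum of marginal contributions telescopes to $v(D) - v(\emptyset)$, and conclude using $\sum_{\pi} p(\pi) = 1$. Your explicit re-indexing by position $k$ is just a slightly more careful write-up of the same telescoping argument the paper displays directly.
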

Even though Proposition~\ref{prop:effWeber} is a very well-known fact in the theory of cooperative games \cite{Weber_1988}, we provide a proof for completeness' sake.
\begin{proof}[Proof of Proposition~\ref{prop:effWeber}]
    First, we notice that, for any $\pi \in \calS_D$, $\sum_{j \in \pset{D}} \lrbra{v\lrpar{\pi^j} - v\lrpar{\pi^j \setminus \lrcubra{j}}}$ is a telescoping series, which is equal to
    \begin{equation*}
        v(\lrcubra{\pi_1}) - v(\emptyset) + v(\lrcubra{\pi_1, \pi_2}) - v(\lrcubra{\pi_1}) +\dots+ v(\lrcubra{\pi_1, \dots, \pi_d}) - v(\lrcubra{\pi_1,\dots,\pi_{d-1}})= v(D) - v(\emptyset).
    \end{equation*}
    Thus,
    \begin{align*}
        \sum_{j \in \pset{D}} \sum_{\pi \in \calS_D} p(\pi) \lrbra{v\lrpar{\pi^j} - v\lrpar{\pi^j \setminus \lrcubra{j}}} &= \sum_{\pi \in \calS_D} p(\pi) \lrpar{\sum_{j \in \pset{D}} \lrbra{v\lrpar{\pi^j} - v\lrpar{\pi^j \setminus \lrcubra{j}}}}\\
        &=\lrpar{v(D) - v(\emptyset)}\sum_{\pi \in \calS_D} p(\pi) = v(D) - v(\emptyset)
    \end{align*}
    where the last equality comes from the fact that $p$ is a probability mass function over $\calS_D$.
\end{proof}

Different choices of random order distributions lead to different allocations. In particular, choosing $p$ as a discrete uniform distribution over $\calS_D$ (i.e., $p(\pi) = 1/d!, ~ \forall \pi \in \calS_D$), denoted $\calU(\calS_D)$, coincides with the Shapley values of the game $(D,v)$, leading to the different, but equivalent, formulation 
\begin{equation}
    \forall j \in D, \enspace \Shap_v(j) = \frac{1}{d!} \sum_{\pi \in \calS_D} \lrbra{v\lrpar{\pi^j} - v\lrpar{\pi^j \setminus \lrcubra{j}}} = \mathbb{E}_{\calU(\calS_D)} \lrbra{v\lrpar{\pi^j} - v\lrpar{\pi^j \setminus \lrcubra{j}}}.
    \label{eq:shapWeber}
\end{equation}

\begin{exa}[Random order formulation for 3 players]
We derive the computation of the Shapley values according to \Eqref{eq:shapWeber} in the case where $D=\lrcubra{1,2,3}$. In this case, the $6$ permutations are given by
\begin{equation*}
    \calS_D = \lrcubra{ (1,2,3), (1,3,2), (2,3,1), (3,2,1), (2,1,3), (3,1,2)}.
\end{equation*}
For every permutation, the marginal contributions (MC) for each player are displayed in Table~\ref{tab:shapWeberMC}.
\begin{table}[ht!]
    \centering
    \begin{tabular}{c|c|c|c}
         Ordering   & Marg. Cont. player $\{1\}$  & Marg. Cont. player $\{2\}$  & Marg. Cont. player $\{3\}$    \\
         \hline
         $(1,2,3)$  & $v(\{1\}) - v(\emptyset)$   & $v(\{1,2\}) - v(\{1\})$     & $v(\{1,2,3\}) - v(\{1,2\})$   \\
         $(1,3,2)$  & $v(\{1\}) - v(\emptyset)$   & $v(\{1,2,3\}) - v(\{1,3\})$ & $v(\{1,3\}) - v(\{1\})$       \\
         $(2,3,1)$  & $v(\{1,2,3\}) - v(\{2,3\})$ & $v(\{2\}) - v(\emptyset)$   & $v(\{2,3\}) - v(\{2\})$       \\
         $(3,2,1)$  & $v(\{1,2,3\}) - v(\{2,3\})$ & $v(\{2,3\}) - v(\{3\})$     & $v(\{3\}) - v(\emptyset)$     \\
         $(2,1,3)$  & $v(\{1,2\}) - v(\{2\})$     & $v(\{2\}) - v(\emptyset)$   & $v(\{1,2,3\}) - v(\{1,2\})$   \\
         $(3,1,2)$  & $v(\{1,3\}) - v(\{3\})$     & $v(\{1,2,3\}) - v(\{2,3\})$ & $v(\{3\}) - v(\emptyset)$     \\
         \hline
    \end{tabular}
    \caption{Marginal contributions of each player for every possible ordering of $3$ players.}
    \label{tab:shapWeberMC}
\end{table}
The average of the marginal contributions of, e.g., player $1$ with uniform weights (equal to $1/6$ here) leads to
\begin{align*}
    \Shap_v(1) &= 2 \times \frac{v(\{1\}) - v(\emptyset)}{6} + 2\times \frac{v(\{1,2,3\}) - v(\{2,3\})}{6}\\
    &+ \frac{v(\{1,2\}) - v(\{1\})}{6} + \frac{v(\{1,3\}) - v(\{3\})}{6}
\end{align*}
which ends up being equivalent to the original formulation above. In general, with three players, one has that, by letting $D=\{i,j,k\}$,
\begin{equation*}
        \Shap_v(j) = 2\times \frac{v(\{j\}) - v(\emptyset) + v(D) - v(D\setminus \{j\})}{6} + \frac{v(\{j,k\}) - v(\{k\}) + v(\{j,i\}) -v(\{i\})}{6},
\end{equation*}
which remains equivalent to the original formula for the Shapley values.
\end{exa}

\paragraph{Shapley Values as an Egalitarian Redistribution of Dividends: the Harsanyi Set}
The allocations in the Harsanyi set, as presented in \Secref{subsec:alloc}, are efficient for any choice of valid weight system.
\begin{prop}[Efficiency of the Harsanyi set]
    Allocations in the Harsanyi set are efficient.
    \label{prop:effHarsa}
\end{prop}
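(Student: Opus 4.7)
The plan is to show efficiency by a direct computation that proceeds in two stages: first, use the definition of the Harsanyi allocation together with the weight constraints to collapse $\sum_{j \in D} \phi_v(j)$ into an unconstrained sum of dividends; second, invoke the Möbius relation between $v$ and $\varphi_v$ to identify this dividend sum with $v(D) - v(\emptyset)$.

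For the first stage, I would start from
\[
\sum_{j \in D} \phi_v(j) = \sum_{j \in D} \sum_{A \in \pset{D} : j \in A} \lambda_j(A)\,\varphi_v(A)
\]
and swap the order of summation to rewrite this as $\sum_{A \in \pset{D}} \varphi_v(A) \sum_{j \in A} \lambda_j(A)$. For the empty coalition the inner sum is vacuous, so that term vanishes; for every nonempty $A$, the assumptions that $\lambda_j(A) = 0$ whenever $j \notin A$ and that $\sum_{j \in D}\lambda_j(A) = 1$ together give $\sum_{j \in A} \lambda_j(A) = 1$. Hence the double sum reduces to $\sum_{A \in \pset{D},\, A \neq \emptyset} \varphi_v(A)$, independently of the particular weight system chosen in the Harsanyi family.

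For the second stage, I would invoke the classical Möbius inversion on $\pset{D}$, which states that the definition $\varphi_v(A) = \sum_{B \subseteq A} (-1)^{|A|-|B|} v(B)$ is equivalent to $v(A) = \sum_{B \subseteq A} \varphi_v(B)$. Specializing to $A = D$ gives $v(D) = \varphi_v(\emptyset) + \sum_{A \subseteq D,\, A \neq \emptyset} \varphi_v(A)$, and the definition yields $\varphi_v(\emptyset) = v(\emptyset)$ immediately. Rearranging produces $\sum_{A \neq \emptyset} \varphi_v(A) = v(D) - v(\emptyset)$, which combined with the first stage delivers the required efficiency identity.

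I do not anticipate a genuine obstacle: both stages are essentially bookkeeping, and the only step requiring any care is the Möbius inversion. If one prefers a self-contained argument rather than citing it, the identity $v(A) = \sum_{B \subseteq A} \varphi_v(B)$ can be checked in one line by expanding the right-hand side and applying the classical binomial identity $\sum_{k=0}^{n}(-1)^{k}\binom{n}{k} = 0$ for $n \geq 1$ to collapse every cross term except the one corresponding to $B = A$.
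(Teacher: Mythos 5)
Your proposal is correct and follows essentially the same route as the paper's proof: both arguments combine the Möbius inversion identity $v(D)-v(\emptyset)=\sum_{A\neq\emptyset}\varphi_v(A)$ with an interchange of summation that uses $\sum_{j\in D}\lambda_j(A)=1$ and $\lambda_j(A)=0$ for $j\notin A$ to collapse the double sum; you merely present the two stages in the opposite order. The optional self-contained verification of the inversion via the alternating binomial identity is a correct, minor embellishment that the paper handles by citing Rota.
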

The efficiency of allocations in the Harsanyi set has been cataloged in the theory of cooperative games (see e.g., \cite{Vasilev2001}). However, for completeness' sake, we provide a proof of this fact.
\begin{proof}[Proof of Proposition~\ref{prop:effHarsa}]
    Recall that the Harsanyi dividends are defined as
    $$\forall A \in \pset{D}, \enspace \varphi_v(A) = \sum_{B \in \pset{A}} (-1)^{|A|-|B|}v(B),$$
    and from Rota's generalization of the Möbius inversion formula on powersets \cite{Rota1964}, it implies that:
    $$\forall A \in \pset{D}, \enspace v(A) = \sum_{B \in \pset{A}} \varphi_v(A).$$
    Notice that, by definition $\varphi_v(\emptyset) = v(\emptyset)$, and thus, in particular, for $A = D$ in the previous equation, we have that 
    $$\sum_{A \in \pset{D}} \varphi_v(A) = v(D) \iff \sum_{A \in \pset{D} : A \neq \emptyset} \varphi_v(A) = v(D) - v(\emptyset).$$
    Next, notice that:
    $$\sum_{A \in \pset{D}~ :~ j \in A} \lambda_j(A) \varphi_v(A) = \sum_{A \in \pset{D}~:~A \neq \emptyset} \lambda_j(A) \mathds{1}_{\lrcubra{j \in A}} \varphi_v(A).$$
    Thus,
    \begin{align*}
        \sum_{j \in D} \sum_{A \in \pset{D}~ :~ j \in A} \lambda_j(A) \varphi_v(A) &= \sum_{j \in D}\sum_{A \in \pset{D}~:~A \neq \emptyset} \lambda_j(A) \mathds{1}_{\lrcubra{j \in A}} \varphi_v(A)\\
        &=\sum_{A \in \pset{D}~:~A \neq \emptyset}\varphi_v(A)\lrpar{\sum_{j \in D}\lambda_j(A) \mathds{1}_{\lrcubra{j \in A}}}\\
        &= \sum_{A \in \pset{D}~:~A \neq \emptyset}\varphi_v(A)\lrpar{\sum_{j \in D}\lambda_j(A)} = \sum_{A \in \pset{D}~:~A \neq \emptyset}\varphi_v(A),
    \end{align*}
    since the weight system must respect $\sum_{j \in D} \lambda_j(A) = 1$. Thus,
    $$\sum_{j \in D} \sum_{A \in \pset{D}~ :~ j \in A} \lambda_j(A) \varphi_v(A) = v(D) - v(\emptyset),$$
    and efficiency follows.
\end{proof}

Different choices of weight systems lead to different allocations. In particular, choosing the egalitarian redistribution of the dividends \cite{Harsanyi1963} (i.e., $\forall j \in D, ~\forall A \in \pset{D}, \enspace \lambda_j(A) = 1/|A|$) coincides with the Shapley values of $(D,v)$, resulting in the different, but equivalent formulation of the Shapley values
\begin{equation}
    \forall j \in D, \enspace \Shap_v(j) = \sum_{A \in \pset{D} ~ : ~ j \in A} \frac{\varphi_v(A)}{|A|}.
    \label{eq:shapHarsa}
\end{equation}
The egalitarian nature of the redistribution of the dividends can be understood as follows: for any $A \in \pset{D}$, $\varphi(A)$ is divided in $|A|$ equal parts, which are redistributed to each of the players in $A$.

\begin{exa}[Dividend-sharing formulation for 3 players]
We derive the computation of the Shapley values according to \Eqref{eq:shapHarsa} in the case where $D=\{1,2,3\}$. The dividends are given in Table~\ref{eq:shapHarsa}.
\begin{table}[ht!]
    \centering
    \begin{tabular}{c|c|c|c|c}
         Coalition      & Harsanyi dividend                                                 & $\lambda_1$ & $\lambda_2$ & $\lambda_3$   \\
         \hline
         $\emptyset$    & $v_{\emptyset}$                                                   & 0           & 0           & 0             \\
         $\{1\}$        & $v_1- v_{\emptyset}$                                              & 1           & 0           & 0             \\
         $\{2\}$        & $v_2- v_{\emptyset}$                                              & 0           & 1           & 0             \\
         $\{3\}$        & $v_3- v_{\emptyset}$                                              & 0           & 0           & 1             \\
         $\{1,2\}$      & $v_{12} - v_1 - v_2 +v_{\emptyset}$                               & 1/2         & 1/2         & 0             \\
         $\{1,3\}$      & $v_{13} - v_1 - v_3+v_{\emptyset}$                                & 1/2         & 0           & 1/2           \\
         $\{2,3\}$      & $v_{23} - v_2 - v_3+v_{\emptyset}$                                & 0           & 1/2         & 1/2           \\
         $\{1,2,3\}$    & $v_{123} - v_{12}- v_{13}- v_{23} +v_1 +v_2+v_3- v_{\emptyset}$   & 1/3         & 1/3         & 1/3           \\
         \hline
    \end{tabular}
    \caption{Harsanyi dividends and weight system for the Shapley values of a game with $3$ players.}
    \label{tab:shapHarsa}
\end{table}
Thus, for $j=1$, the reweighting of the dividends leads to:
\begin{align*}
    \Shap_v(j) &= v(\emptyset)\lrpar{-1 + \frac{1}{2} + \frac{1}{2} - \frac{1}{3}} +v(\{1\})\lrpar{1 -\frac{1}{2} - \frac{1}{2} + \frac{1}{3}} + v(\{2\})\lrpar{-\frac{1}{2} + \frac{1}{3}}\\
    & + v(\{3\})\lrpar{-\frac{1}{2} + \frac{1}{3}} + v(\{1,2\}) \lrpar{\frac{1}{2} - \frac{1}{3}} + v(\{1,3\}) \lrpar{\frac{1}{2} - \frac{1}{3}}\\
    &+ v(\{2,3\})\lrpar{-\frac{1}{3}} + \frac{v(\{1,2,3\})}{3}
\end{align*}
$$=\frac{v(\{1\}) - v(\emptyset)}{3} +\frac{v\lrpar{\{1,2,3\}}- v(\{2,3\})}{3} + \frac{ v(\{1,2\})- v(\{2\})}{6} + \frac{v(\{1,3\})  - v(\{3\})}{6}$$
and we recover the original formula for the Shapley values.
\end{exa}

\paragraph{General Relationship Between the Weber and Harsanyi Set}
The Weber set of allocations is a subset of the Harsanyi set. Thus, for a given random order distribution $p(\pi)$, the corresponding weight system is given by (see \cite{Derks2005})
\begin{equation}
    \forall j \in D,~ \forall A \in \pset{D},\quad \lambda_j(A) = \sum_{\pi \in \calS_D~:~ A \in \pi^i} p(\pi),
    \label{eq:lambdaPi}
\end{equation}
such that the resulting allocations are equivalent. Moreover, it is possible to find the corresponding random order distribution for certain weight systems. More precisely, \cite{Derks2002} (Theorem 5.5 and Corollary 5.7) showed that the Weber set coincides with the Harsanyi set restricted to weight systems satisfying:
$$\forall A \in \pset{D},~ \forall j \in A, \quad \sum_{B \in \pset{D}} (-1)^{|A|-|B|}\lambda_j(B) \geq 0,$$
and consequently, for such weight systems, it is also possible to write their corresponding random order distribution.

\subsection{Faster Computations: Sampling Random Orders}\label{app:subsec:weber:approx}

Exact estimators of allocations from either the Weber or the Harsanyi set require estimating the value function $v$ on all the $2^d -1$ coalitions of players. This quickly becomes intractable in practice for moderate to large numbers of features.
%A solution to this problem would be only estimating the value function on a selected subset of $\pset{D}$. 
%Several strategies have been proposed in the literature, such as removing coalitions whose cardinality is above a certain threshold \cite{Rabitz1999,Li2001}, randomly sampling according to a distribution over the coalitions for the Shapley values \cite{Jethani2022}, or Monte Carlo-type sampling the permutations of $D$ for the Shapley values \cite{Strumbelj2014, Song2016}. 
We propose a Monte Carlo-type sampling strategy over permutations of \( D \) to accommodate our methodological contributions, providing unbiased, consistent, and approximately normal estimates for any allocation in the Weber set (provided $p(\pi)$ is known).
First, let us recall the contributions of \cite{Strumbelj2014, Song2016} for the Shapley values. The authors leveraged \Eqref{eq:shapWeber} by noticing that it is an expectation over uniformly distributed permutations. Hence, by uniformly sampling a subset $\Pi_m\subseteq \calS_D$ of $|\Pi_m|=m \ll d!$ permutations, an approximation of this expectation is given by
$$ \forall j \in D, \enspace \widehat{\text{Shap}_v}(j) = \frac{1}{m} \sum_{\pi \in \Pi_m}  \lrbra{v\lrpar{\pi^j} - v\lrpar{\pi^j \setminus \lrcubra{j}}},$$
and unbiasedness, consistency, and asymptotic normality of this estimator follows from basic Monte Carlo sampling arguments.

Now, the same rationale can be applied for an arbitrary, but known, probability mass function $p(\pi)$. It requires randomly sampling $\Pi_m \subseteq \calS_D$ according to $p(\pi)$, and the approximation of the corresponding Weber allocation is given by:
\begin{equation}
    \forall j \in D, \enspace \widehat{\phi_v}(j) = \frac{1}{m} \sum_{\pi \in \Pi_m}  \lrbra{v\lrpar{\pi^j} - v\lrpar{\pi^j \setminus \lrcubra{j}}},
    \label{eq:approxWeberAlloc}
\end{equation}
yielding unbiased, consistent, and asymptotically normal estimators. This generalized Monte Carlo approximation scheme for allocations in the Weber set is illustrated in Algorithm~\ref{alg:WeberPermut}.

\begin{algorithm}[ht!]
\caption{Monte Carlo-type random order allocation approximation}\label{alg:WeberPermut}
\begin{algorithmic}[1]
    \Require Cooperative game $(D=\{1,\dots, d\},v)$
    \Require Number of permutations $m \ll d!$
    \Require Probability measure $p(\pi)$ over $\calS_D$
    \Ensure Approximation $\widehat{\phi_v}=\lrpar{\widehat{\phi_v}(1), \dots,\widehat{\phi_v}(d)}$ of Weber allocation $\phi_v$ with $p(\pi)$
    \For{$k \in \{1,\dots,m\}$}
        \State Sample $\pi_k \in \calS_D$ according to $p(\pi)$
        \For{$j =1, \dots, d$}
            \State Compute $MV_{\pi_k}^{j} = v\lrpar{\pi_k^j} - v\lrpar{\pi_k^j \setminus \lrcubra{j}}$
        \EndFor
    \EndFor
    \For{$j \in D$}
        \State Compute $\widehat{\phi_v}(j) = \frac{1}{m} \sum_{k=1}^m MV_{\pi_k}^j$
    \EndFor
    \State \Return $\widehat{\phi_v} =\lrpar{\widehat{\phi_v}(1), \dots, \widehat{\phi_v}(d)}$
\end{algorithmic}
\end{algorithm}

\subsection{Random Order Sampling for the Proportional Shapley Values} \label{app:subsec:pshap:approx}
The proportional Shapley values are part of a broader set of allocations, called the weighted Shapley values. They are characterized by weight systems having the form $\lambda^{\textrm{WS}}_i(A) = \frac{w(i)}{w(A)}$ for some arbitrary weight function $w : \pset{D} \rightarrow \R$. In \cite{Dehez2017}, the author linked this set of allocations to their random order expression. The random order distribution is then given by:
\begin{equation}
    \forall \pi =(\pi_1, \dots, \pi_d) \in \calS_D, \quad \textrm{WS}(\pi) \eqdef \exp \lrbra{ - \sum_{j=2}^d \log\lrpar{1+ \sum_{k=1}^{d-1} \frac{w\lrpar{\pi_j}}{w\lrpar{\pi_k}}}}.
    \label{eq:piWS}
\end{equation}
The proportional Shapley values in \Eqref{eq:piWS} can be seen as the particular case of weight function $w(A) = \sum_{j\in A}v(\{j\})$ for every $A \in \pset{D}$. Hence, the random order distribution for the proportional Shapley values is given by
\begin{equation}
    \forall \pi \in \calS_D, \quad  \textrm{PS}(\pi) \eqdef \exp \lrbra{ - \sum_{j=2}^d \log\lrpar{1 + \sum_{k=1}^{d-1} \frac{\absval{v\lrpar{\pi_j}}}{\absval{v\lrpar{\pi_k}}}}},
    \label{eq:propProbaPM}
\end{equation}
provided the values are positive. In case of zero individual values, the probabilities can be computed by considering the subgame $D\setminus Z$ with $Z = \lrcubra{j \in D~:~v(\{j\})=0}$. The interpretation of this result from \cite{Dehez2017} can be leveraged to construct a simple sampling scheme \wrt this random order distribution. First, draw $\pi_d$ from $D$ where each $j \in D$ has probability $v\lrpar{\lrcubra{j}}/\sum_{i \in D} v\lrpar{\lrcubra{i}}$, then draw $\pi_{d-1}$ from $D \setminus \pi_d$, where $j \in D$ has probability $v\lrpar{\lrcubra{j}}/\sum_{i \in D \setminus \pi_{d}} v\lrpar{\lrcubra{i}}$ and continue until every element of $D$ has been drawn. The resulting permutation $\pi = (\pi_1, \dots, \pi_d)$ will have been drawn with probability as in \Eqref{eq:propProbaPM}. This sampling scheme is further detailed in Algorithm~\ref{alg:PSPermutSampling}.

\begin{algorithm}[ht!]
\caption{Drawing permutations according to the proportional Shapley values}\label{alg:PSPermutSampling}
\begin{algorithmic}[1]
        \Require Cooperative game $(D=\{1,\dots, d\},v)$
        \Require Computed values for $v(\{1\}), \dots, v(\{d\})$
    \Ensure Permutation $\pi = (\pi_1, \dots, \pi_d)$ drawn according to \Eqref{eq:propProbaPM}
    \State Initialize $\tilde{D} \leftarrow D$
    \For{$j=0,\dots,d-1$}
        \State Draw $\pi_{d-j}$ from $\tilde{D}$ where each $k \in \tilde{D}$ has probability $v\lrpar{\lrcubra{k}}/\sum_{i \in\tilde{D}} v\lrpar{\lrcubra{i}}$
        \State Let $\tilde{D} \leftarrow \tilde{D} \setminus \pi_{d-j}$
    \EndFor
    \State \Return $\pi=\lrpar{\pi_1, \dots, \pi_d}$
\end{algorithmic}
\end{algorithm}

\subsection{Importance Sampling: Reweighting Computed Values}\label{app:subsec:impsampling}

By applying a reweighting of the summands in \Eqref{eq:approxWeberAlloc}, it is possible to produce estimates for a different random order allocation without re-sampling permutations or re-computing the value functions on selected coalitions. More precisely, let $p$ and $p'$ be two different random order distributions, related to the random order allocations $\phi$ and $\phi'$. Let $\Pi_m$ be a sample of $\calS_D$ drawn according to $p$. Then, it can be shown that
$$ \forall j \in D, \enspace \widehat{\phi'_v}^{\textrm{IS}} \eqdef \frac{1}{m} \sum_{\pi \in \Pi_m} \frac{p'(\pi)}{p(\pi)}\lrbra{v\lrpar{\pi^j} - v\lrpar{\pi^j \setminus \lrcubra{j}}},$$
is an unbiased, consistent, and asymptotically normal estimate of $\phi'$, from basic Monte Carlo-type arguments. This entails that if a practitioner commits to approximation, e.g., the Shapley values by drawing random permutations, the computations of the value function on the selected coalitions can be recycled to approximate, e.g., the proportional Shapley values by reweighting the summands in \Eqref{eq:approxWeberAlloc} by $\textrm{PS}(\pi)\times d!$.

%%===========================================================%%
%% Proofs
\section{Proofs}
\label{apdx:proofs}
\subsection{Efficiency property of CP-based UA}

%% Proof of Prop 3.1
\begin{proof}[Proof of Proposition~\ref{prop:eff:shap:cp}]
This is a direct consequence of Proposition~\ref{prop:effHarsa} as both the Shapley and proportional Shapley values are in the Harsanyi set of allocations.
\end{proof}

\subsection{Statistical guarantees of the approximation scheme}
%%===========================================================%%
%% Proof of thm 1
\begin{thm*}[Formal statement of Theorem~\ref{thm:cvgmontecarlo}]
Let $p$ be any probability mass function over the permutations of $\calS_D$. For a positive integer $m$, let $\pi_1, \dots, \pi_m$ be an i.i.d. sample drawn from $p$. For $\pi \in \calS_D$, let $h(\pi) =v(\pi^j) - v(\pi^j \setminus \{j\})$. Assume that $0<\mathbb{E}_p\lrbra{h(\pi)^2} < \infty$. Then, for every $j \in D$, 
$$\widehat{\phi_v}(j) = \frac{1}{m} \sum_{i=1}^m h(\pi_i),$$
as computed by Algorithm~\ref{alg:WeberPermut}, are unbiased, strongly consistent, and asymptotically normal estimators of $\phi_v(j) := \mathbb{E}_{p}[h(\pi)]$. 

This result specializes to the Shapley values when $p= \calU(\calS_D)$, and to the proportional Shapley values when $p=PS$ as in \Eqref{eq:propProbaPM}, corresponds to the distributions associated with the Shapley value or the proportional Shapley value.
\end{thm*}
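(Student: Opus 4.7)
The plan is to recognize that the theorem is essentially a textbook Monte Carlo claim: the estimator $\widehat{\phi_v}(j)$ is a sample mean of the i.i.d. random variables $h(\pi_1),\dots,h(\pi_m)$, so unbiasedness, strong consistency and asymptotic normality all reduce to standard arguments, and the only thing truly specific to the paper is identifying the target $\mathbb{E}_p[h(\pi)]$ with the named allocations in the two special cases.

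First, for unbiasedness, I would expand
\[
\mathbb{E}_p\bigl[\widehat{\phi_v}(j)\bigr] = \frac{1}{m}\sum_{i=1}^m \mathbb{E}_p[h(\pi_i)] = \mathbb{E}_p[h(\pi)] = \phi_v(j)
\]
using linearity of expectation and the fact that the $\pi_i$ are identically distributed. For strong consistency, I would invoke Kolmogorov's strong law of large numbers applied to the i.i.d.\ sequence $h(\pi_1),h(\pi_2),\dots$; the $L^1$ integrability required by the SLLN follows from the assumption $\mathbb{E}_p[h(\pi)^2]<\infty$ via the Cauchy–Schwarz inequality (and is actually automatic since $\calS_D$ is finite, so $h$ is bounded). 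This yields $\widehat{\phi_v}(j) \xrightarrow{\text{a.s.}} \phi_v(j)$. For asymptotic normality, the Lindeberg–Lévy CLT applies directly under the finite-variance hypothesis, giving
\[
\sqrt{m}\bigl(\widehat{\phi_v}(j) - \phi_v(j)\bigr) \xrightarrow{d} \calN\!\bigl(0,\, \sigma_j^2\bigr),
\quad \sigma_j^2 = \mathrm{Var}_p\bigl(h(\pi)\bigr).
\]

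The last step is the specialization claim. For $p=\calU(\calS_D)$, I would plug into $\mathbb{E}_p[h(\pi)] = \tfrac{1}{d!}\sum_{\pi\in\calS_D}[v(\pi^j)-v(\pi^j\setminus\{j\})]$ and recognize the right-hand side as exactly the Weber form of the Shapley value written in \Eqref{eq:shapWeber}. For $p=\mathrm{PS}$ as defined in \Eqref{eq:propProbaPM}, I would appeal to the result of Dehez (cited in Appendix~\ref{app:subsec:pshap:approx}) which states that the weighted Shapley value with weights $w(i)=|v(\{i\})|$ admits the random-order representation under exactly this distribution; the corresponding Weber expectation therefore coincides with $\PShap_v(j)$ as defined in \eqref{eq:pshap}.

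The argument is essentially routine, so the main ``obstacle'' is less a technical hurdle than a bookkeeping one: making transparent the equivalence between the Weber-set Monte Carlo mean and the Harsanyi-dividend expression of the proportional Shapley value. I would handle this by citing Derks–Haller–Peters and Dehez, who establish the bijection between the appropriate subset of the Harsanyi weight systems and their random-order representations, rather than re-deriving it here.
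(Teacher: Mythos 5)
Your proposal is correct and follows essentially the same route as the paper's own proof: linearity of expectation for unbiasedness, the strong law of large numbers for strong consistency, and the central limit theorem for asymptotic normality, with the specializations read off from the Weber representation \eqref{eq:shapWeber} and the Dehez random-order distribution \eqref{eq:propProbaPM}. Your added observations (that integrability is automatic since $\calS_D$ is finite, and the explicit treatment of the specialization step) only make the argument slightly more complete than the paper's.
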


\begin{proof}[Proof of Theorem~\ref{thm:cvgmontecarlo}]
Let \( j \in D \). The estimator
\[
\widehat{\phi_v}(j) = \frac{1}{m} \sum_{i=1}^m h(\pi_i),
\]
corresponds to the classical Monte Carlo integration estimator of \( \mathbb{E}_{p}[h(\pi)] \). Following \cite{mccasella, strumbelj2010jmlr}, provided \( \mathbb{E}_{p}[|h(\pi)|] < \infty \), the strong law of large numbers ensures that \( \widehat{\phi_v}(j) \) is strongly consistent. Moreover, assuming \( 0<\mathbb{V}_{p}\lrpar{f(\pi)}<\infty\), the central limit Theorem guarantees asymptotic normality. Finally, unbiasedness comes from the linearity of the expectation operator.
\end{proof}

%%===========================================================%%
%% Proof of thm 2
\begin{thm*}[Formal statement of Theorem~\ref{thm:cvgimpsampling}]
Let $p$ and $p'$ be two probability mass functions over $\calS_D$, and for a positive integer $m$, let $\pi_1, \dots, \pi_m$ be an i.i.d. sample drawn from $p$. Assume that $0<\mathbb{E}_{p}\lrbra{\lrpar{\frac{p'(\pi)}{p(\pi)}h(\pi)}^2} < \infty$. Then, for every $j \in D$, 
$$\widehat{\phi'_v}^{\textrm{\normalfont IS}} = \frac{1}{m} \sum_{i=1}^m \frac{p'(\pi_i)}{p(\pi_i)} h(\pi_i)$$
are unbiased, strongly consistent, and asymptotically normal estimators of $\phi_v'(j) := \mathbb{E}_{p'}[h(\pi)]$.

For every $j \in D$, if $p' = PS$ and $p = \calU(\calS_D)$, this results particularizes to $\widehat{\text{P-Shap}}^{\text{IS}}_v(j)$, and if $p' = \calU(\calS_D)$ and $p = PS$, it particularizes to $\widehat{\text{Shap}}^{\text{IS}}_v(j)$.
\end{thm*}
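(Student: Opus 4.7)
The plan is to reduce the importance sampling estimator to a vanilla Monte Carlo sample mean via a change of measure, so that the argument used for Theorem~\ref{thm:cvgmontecarlo} can be reused almost verbatim. Concretely, fix $j \in D$ and set $g_j(\pi) := \frac{p'(\pi)}{p(\pi)} h(\pi)$, where $h(\pi) = v(\pi^j) - v(\pi^j \setminus \{j\})$. For this ratio to be well defined on $\calS_D$, I implicitly require that $p(\pi) > 0$ whenever $p'(\pi) h(\pi) \neq 0$; this is automatic on the finite set $\calS_D$ for the two distributions of interest here ($\calU(\calS_D)$ and $\textrm{PS}$), both of which assign positive mass to every permutation. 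Then $\widehat{\phi'_v}^{\textrm{IS}} = \frac{1}{m}\sum_{i=1}^m g_j(\pi_i)$ is an i.i.d.\ sample mean under $p$.

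First, I would establish unbiasedness through the textbook importance sampling identity
$$\mathbb{E}_p\lrbra{g_j(\pi)} = \sum_{\pi \in \calS_D} p(\pi) \frac{p'(\pi)}{p(\pi)} h(\pi) = \sum_{\pi \in \calS_D} p'(\pi)\, h(\pi) = \mathbb{E}_{p'}\lrbra{h(\pi)} = \phi_v'(j),$$
followed by linearity of expectation to conclude $\mathbb{E}_p\bigl[\widehat{\phi'_v}^{\textrm{IS}}\bigr] = \phi_v'(j)$. Next, the moment hypothesis $\mathbb{E}_p\bigl[g_j(\pi)^2\bigr] < \infty$ entails $\mathbb{E}_p\bigl[|g_j(\pi)|\bigr] < \infty$ by Cauchy--Schwarz (equivalently, Jensen), so the strong law of large numbers immediately delivers strong consistency. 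Asymptotic normality then follows from the classical central limit theorem for i.i.d.\ random variables with finite and strictly positive variance; these two conditions are exactly the content of the two-sided bound $0 < \mathbb{E}_p\bigl[g_j(\pi)^2\bigr] < \infty$ assumed in the hypothesis, since the asymptotic variance is $\mathbb{E}_p[g_j(\pi)^2] - \phi_v'(j)^2$.

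The two specializations stated in the theorem then require only a substitution: taking $(p, p') = (\textrm{PS}, \calU(\calS_D))$ recovers $\widehat{\text{Shap}}^{\text{IS}}_v(j)$, while $(p, p') = (\calU(\calS_D), \textrm{PS})$ recovers $\widehat{\text{P-Shap}}^{\text{IS}}_v(j)$, in both cases with $\phi_v'(j)$ equal to $\Shap_v(j)$ or $\text{P-Shap}_v(j)$ respectively by Theorem~\ref{thm:cvgmontecarlo}. There is no genuine technical obstacle: the entire argument is textbook importance sampling on a finite space, and the only subtlety worth flagging is the absolute-continuity condition on the supports that is implicit in the change-of-measure step, which is trivially verified in the finite, full-support regime at hand.
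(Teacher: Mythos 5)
Your proof is correct and follows essentially the same route as the paper's: both treat $\widehat{\phi'_v}^{\textrm{IS}}$ as a plain i.i.d.\ sample mean of the reweighted summand $\frac{p'(\pi)}{p(\pi)}h(\pi)$ under $p$, obtaining unbiasedness from linearity of expectation and the change-of-measure identity, strong consistency from the strong law of large numbers, and asymptotic normality from the classical CLT. You are in fact slightly more explicit than the paper --- spelling out the importance-sampling identity, the support condition making the likelihood ratio well defined, and the passage from the second-moment hypothesis to integrability --- but this is the same argument, not a different one.
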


\begin{proof}[Proof of Theorem~\ref{thm:cvgimpsampling}]
    Let \( j \in D \). The estimator
$$\widehat{\phi'_v}^{\text{IS}}(j) = \frac{1}{m} \sum_{i=1}^m \frac{p'(\pi_i)}{p(\pi_i)} h(\pi_i),$$
corresponds to the classical importance sampling estimator of $\mathbb{E}_{p'}[h(\pi)]$. Provided $\mathbb{E}_{p}\left[\frac{p'(\pi)}{p(\pi)}h(\pi)\right] < \infty$, the strong law of large numbers ensures that $\widehat{\phi'_v}^{\text{IS}}(j)$ is strongly consistent \cite{mccasella}. Moreover, provided $0<\mathbb{V}_{p}\lrpar{\frac{p'(\pi)}{p(\pi)} h(\pi)} < \infty$, the central limit theorem guarantees asymptotic normality. Finally, unbiasedness comes from the linearity of the expectation operator.
\end{proof}

%%===========================================================%%
%% Additional results
\section{Additional Details and Results on the Experiments}\label{sec:addExps}

\subsection{Modified Sobol'-Ativan Benchmark}\label{apdx:results-sobolativan}

\Figref{apdx:fig:mc-conv} provides an alternative view of the convergence behavior of the Shapley value estimates using the sampling-based approximation method. For each variable, the figure reports the estimates obtained across 150 replications for each number of sampled permutations $m$ (y-axis). Shapley values are shown along the x-axis. Dots represent the mean estimate across replications, and horizontal bars indicate $\pm1$ standard deviation. Vertical lines mark the Shapley values computed using the exact method. As expected, the variance of the estimates---computed over 150 replications for each value of sampled permutations $m$---decreases with the number of permutations.

\begin{figure}[ht!]
    \centering
    \includegraphics[width=\linewidth]{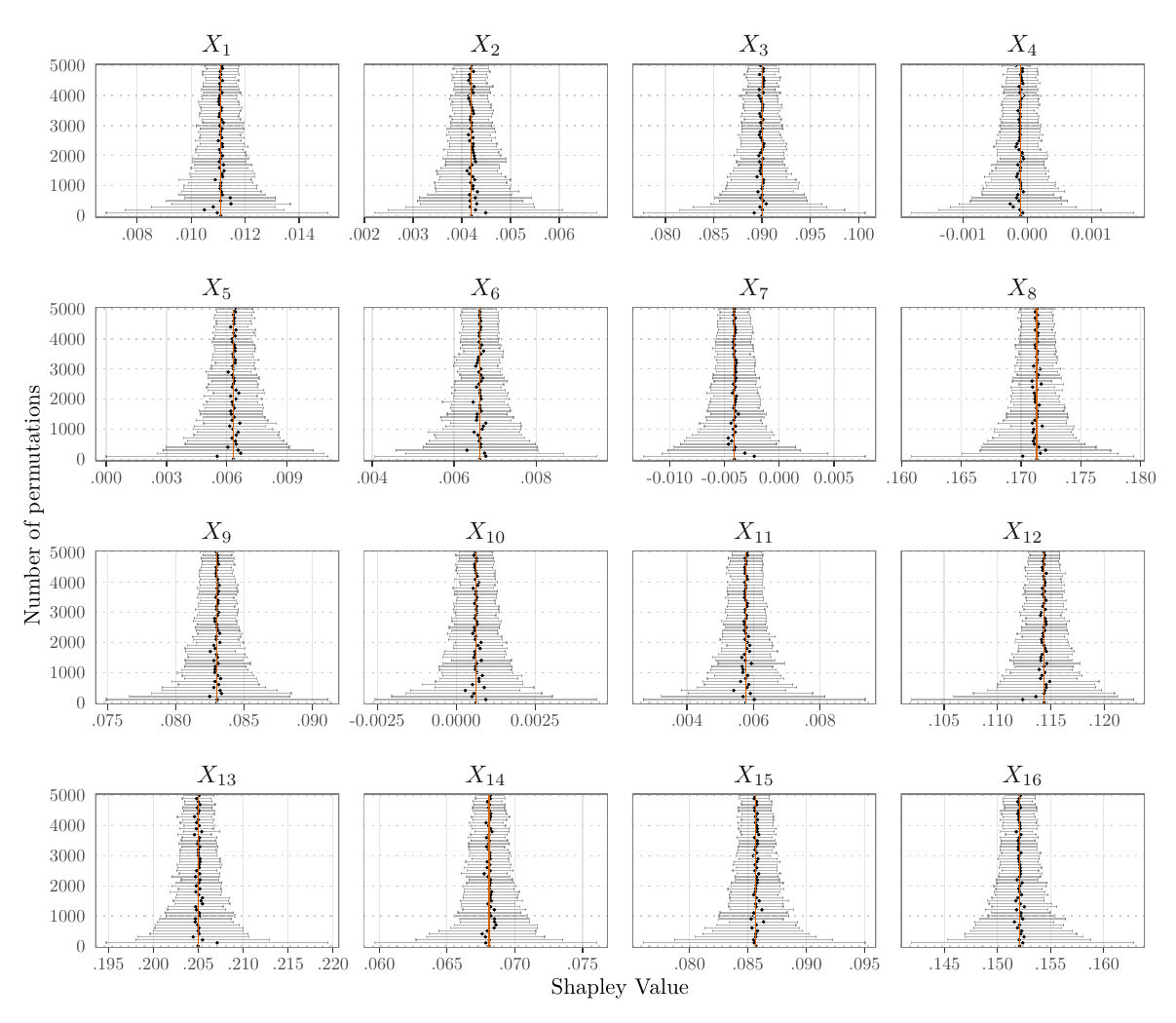}%
    \caption{Empirical convergence of Monte Carlo Shapley value estimates. For each number of permutations (y-axis), a black dot indicates the mean Shapley value over 150 replications, and horizontal bars indicate $\pm1$ standard deviation. The orange vertical line marks the Shapley value computed with the exact estimation procedure.}
    \label{apdx:fig:mc-conv}
\end{figure}

\subsection{Real-World Datasets}
We conduct a series of experiments on real-world datasets to estimate feature-wise allocations that quantify each variable's contribution to uncertainty. These allocations are based on Conformal Prediction (CP) intervals, and the underlying models used to construct the CP intervals vary across experiments. Depending on the CP method employed, different value functions are defined to reflect the target quantity (e.g., interval width or predicted bound). We then estimate allocations using either Shapley values or Proportional Shapley (P-Shapley) values. An overview is reported in \Tabref{tab:methods-overview}.

\begin{table}[h!]
\centering
\begin{tabular}{l|l|l|l}
\hline
\textbf{Method} & \textbf{Predictive Models} & \textbf{Value Functions} & \textbf{Allocation Methods} \\
\hline
SMR & \begin{tabular}[t]{@{}l@{}}LR,\\ RF,\\ LGB\end{tabular} 
    & \begin{tabular}[t]{@{}l@{}}Width,\\ Upper Bound,\\ Lower Bound,\\ Cond. Mean\end{tabular}
    & \begin{tabular}[t]{@{}l@{}}Shap,\\ P-Shap\end{tabular} \\
\hline
LACP & \begin{tabular}[t]{@{}l@{}}\textit{Cond. Mean:} LR,\\ RF,\\ LGB\\ \textit{MAD:} LGB (\textit{MAD})\end{tabular}
     & \begin{tabular}[t]{@{}l@{}}Width,\\ Upper Bound,\\ Lower Bound,\\ Cond. Mean and MAD\end{tabular}
     & \begin{tabular}[t]{@{}l@{}}Shap,\\ P-Shap\end{tabular} \\
\hline
CQR & \begin{tabular}[t]{@{}l@{}}Q-LR,\\Q-RF\end{tabular}
    & \begin{tabular}[t]{@{}l@{}}Interval Width,\\ Upper Bound,\\ Lower Bound\end{tabular}
    & \begin{tabular}[t]{@{}l@{}}Shap,\\ P-Shap\end{tabular} \\
\hline
\end{tabular}
\caption{Overview of implemented methods, models, value functions, and allocation approaches.}
\label{tab:methods-overview}
\end{table}

\paragraph{Methods, models, value functions, and allocation approaches} We evaluate our methods on the seven publicly available datasets described in \Tabref{tab:datasets}. Depending on the number of features, we either estimate the allocations using the exact method or the approximation method (\Tabref{tab:dataset-estimation}).

\begin{table}[h!]
\centering
\begin{tabular}{c|c|c|c|c}

Dataset & $n$ & $d$ & Allocation & \# Models Trained \\
\hline
\texttt{bike}     & $17,379$ & $12$ & Exact                  & $4,096$\\
\texttt{blog}     & $52,397$ & $238$& Approx.\ (\(m = 50\))  & $11,841$ \\
\texttt{casp}     & $45,730$ & $9$  & Exact & 512\\
\texttt{concrete} & $1,030$  & $8$  & Exact                  & $256$\\
\texttt{facebook} & $79,788$ & $37$ & Approx.\ (\(m = 200\)) & $6,814$\\
\texttt{UScrime}  & $1,993$   & $101$& Approx.\ (\(m = 200\)) & $19,780$\\
\texttt{star}     & $2,161$  & $38$ & Approx.\ (\(m = 200\)) & $7,220$\\
\hline
\end{tabular}
\caption{Estimation of allocations and model count by dataset.}
\label{tab:dataset-estimation}
\end{table}

\paragraph{Hyperparameter Settings} We rely on the default hyperparameters provided by the corresponding \texttt{R} packages, with a few targeted modifications. For Random Forests and Quantile Random Forests (\texttt{randomForest}, \texttt{quantregForest}), we set the minimum node size to 20\% of the training set, limit the number of trees to 75, and define the number of variables considered at each split (\texttt{mtry}) as $\sqrt{d}$. For LightGBM (\texttt{lightgbm}), we fix the number of boosting rounds to 100, except for the \texttt{USCrimes}, \texttt{star}, and \texttt{blog} datasets, where it is fixed to 25. In the case of Linear Quantile Regression (\texttt{quantreg}), we use the Frisch–Newton algorithm for optimization and set the tolerance to the default \texttt{R} precision, \texttt{.Machine\$double.eps}.

\paragraph{Visualization of Feature-Wise Contribution to Uncertainty} For each dataset, after estimating the allocations corresponding to a given value function and CP method flavor, we visualize the results using two complementary representations. The first is a matrix plot with variables as rows and ranks as columns. The cell $[i,j]$ indicates the percentage of test-set observations for which the absolute value of the allocation (Shapley or P-Shapley) for variable $i$ ranks in position $j$ among all variables. The cell color encodes this percentage: the darker the green, the more frequently variable $i$ appears in rank $j$ across the test set. The second visualization highlights the most frequently top-ranked variables. For each rank from 1 to 5 (x-axis), we plot the proportion of test-set observations for which a given variable appears most often in that position when allocations are ordered by absolute value. The y-axis reflects the frequency of the dominant variable at each rank. This shows which features consistently contribute the most to uncertainty. It is important to note that a single variable may appear as the most frequent at multiple ranks, reflecting consistent influence across several positions in the allocation rankings.

\newcommand{\footerplotranks}[1]{
Heatmap of variable ranks across test observations (\textbf{A}), and top-5 rank frequencies of the most dominant variables (\textbf{B}). Top to bottom CP intervals: SMR, LACP, CQR. Value function: width. \texttt{#1} dataset
}

\newpage
%% Bike dataset
\subsubsection{\texttt{bike} Dataset}
\begin{figure}[ht!]
    \centering
    \includegraphics[width=\linewidth]{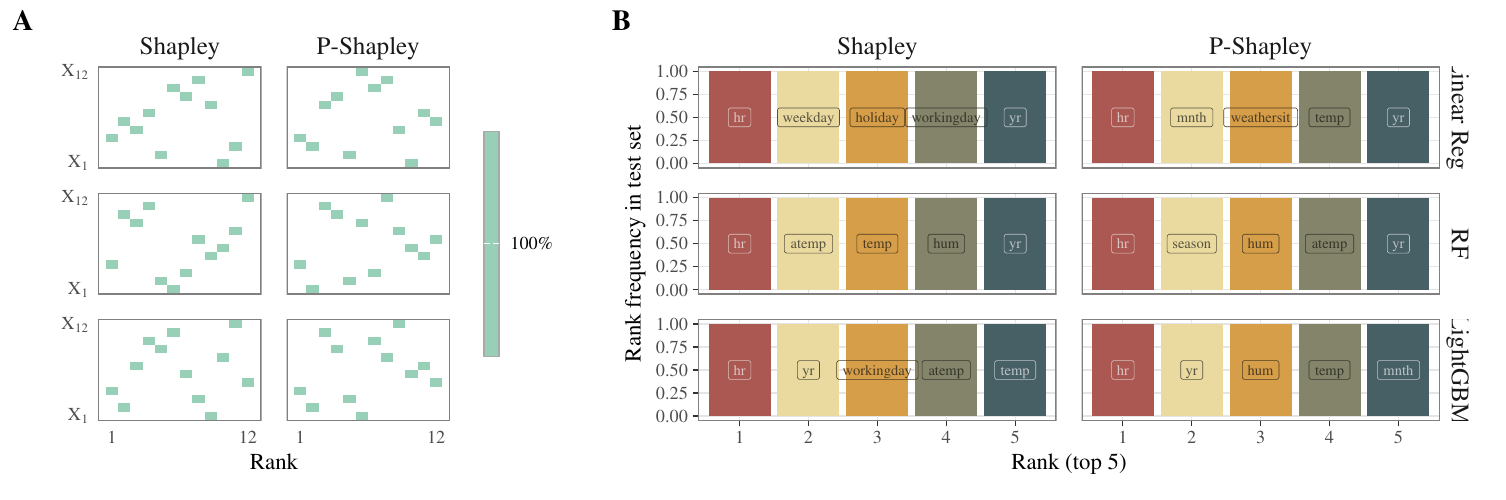}\\
    \includegraphics[width=\linewidth]{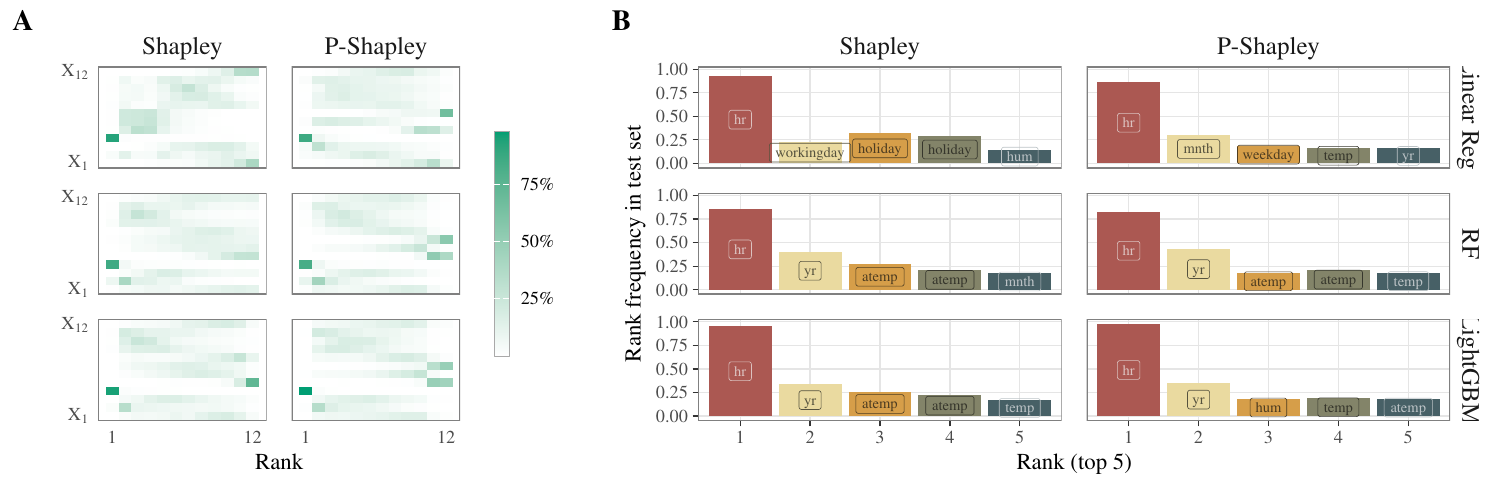}\\
    \includegraphics[width=\linewidth]{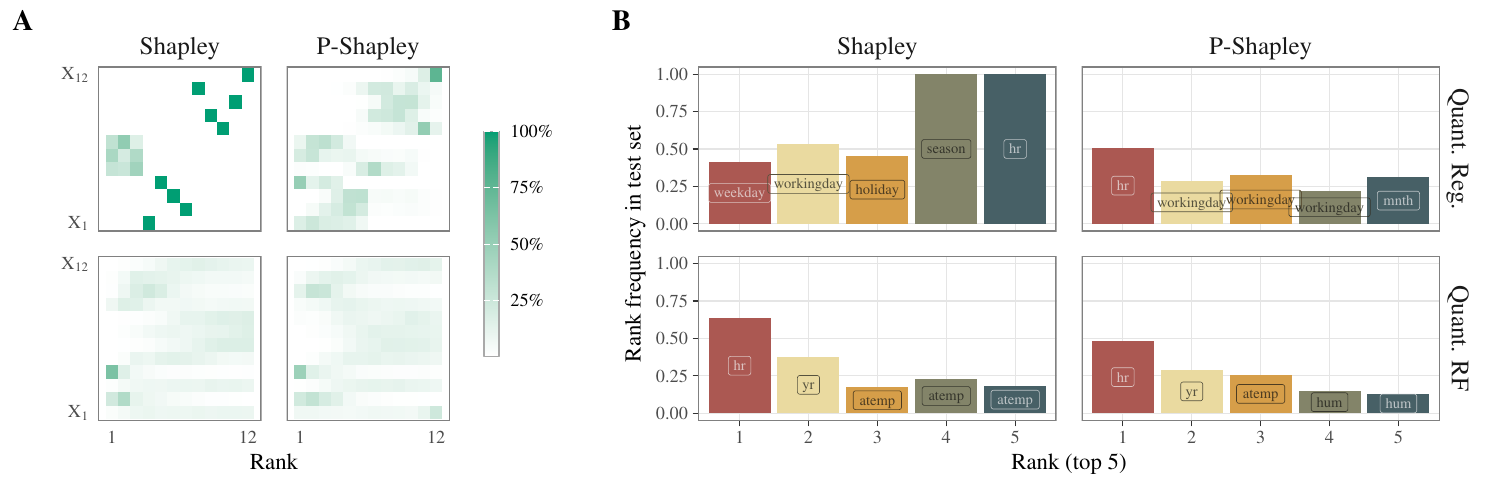}
    \caption{\footerplotranks{bike}}
    \label{fig:top-ranks-bike}
\end{figure}
\

\newpage
% Blog dataset
\subsubsection{\texttt{blog} Dataset}
\begin{figure}[ht!]
    \centering
    \includegraphics[width=\linewidth]{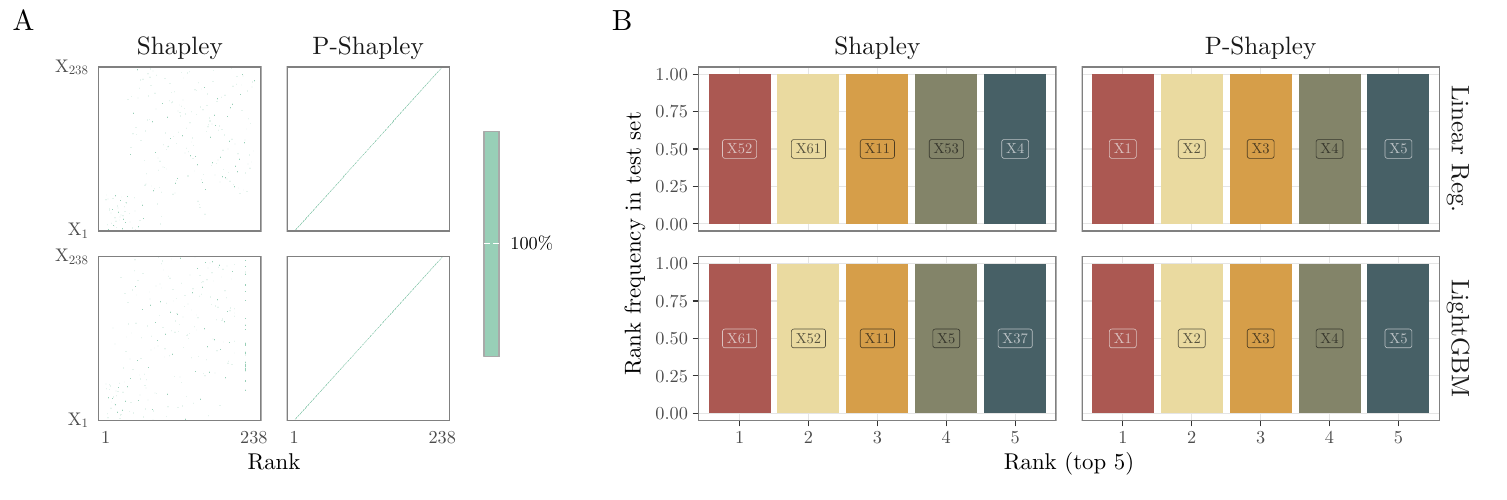}\\
    \includegraphics[width=\linewidth]{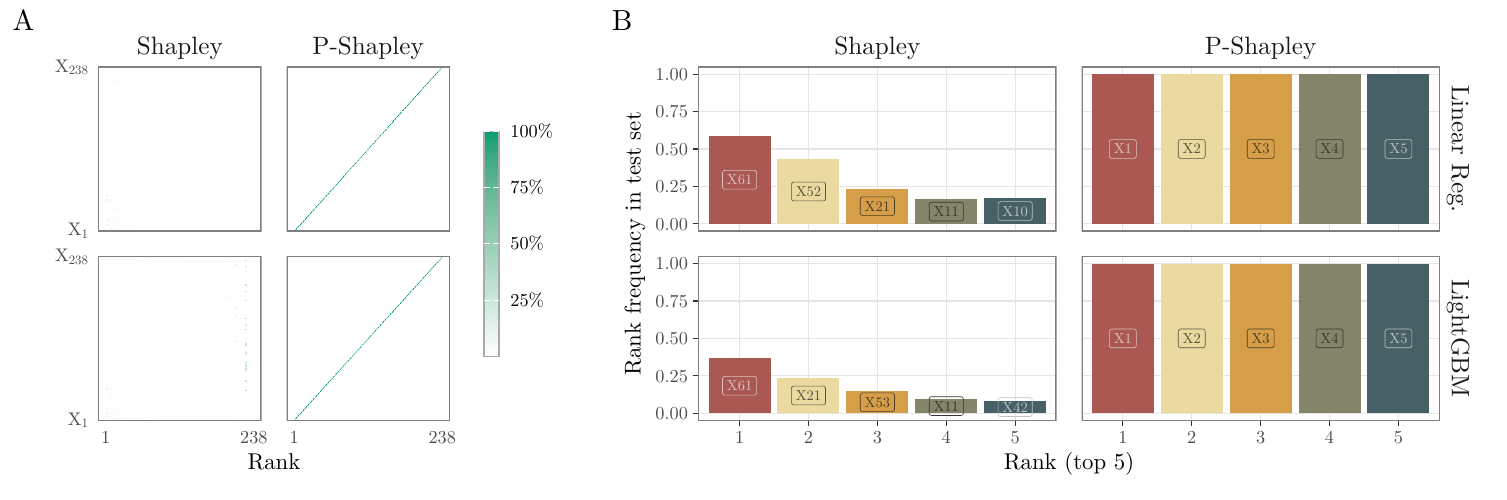}\\
    \includegraphics[width=\linewidth]{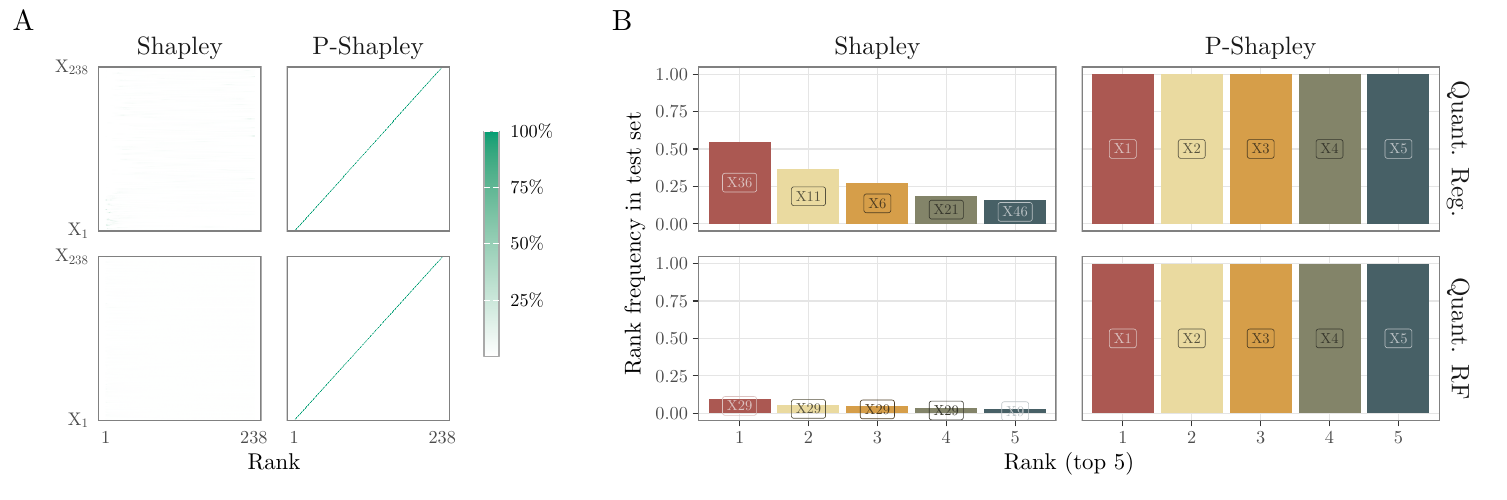}
    \caption{\footerplotranks{blog}}
    \label{fig:top-ranks-blog}
\end{figure}
\

\newpage
%% casp dataset
\subsubsection{\texttt{casp} Dataset}
\begin{figure}[ht!]
    \centering
    \includegraphics[width=\linewidth]{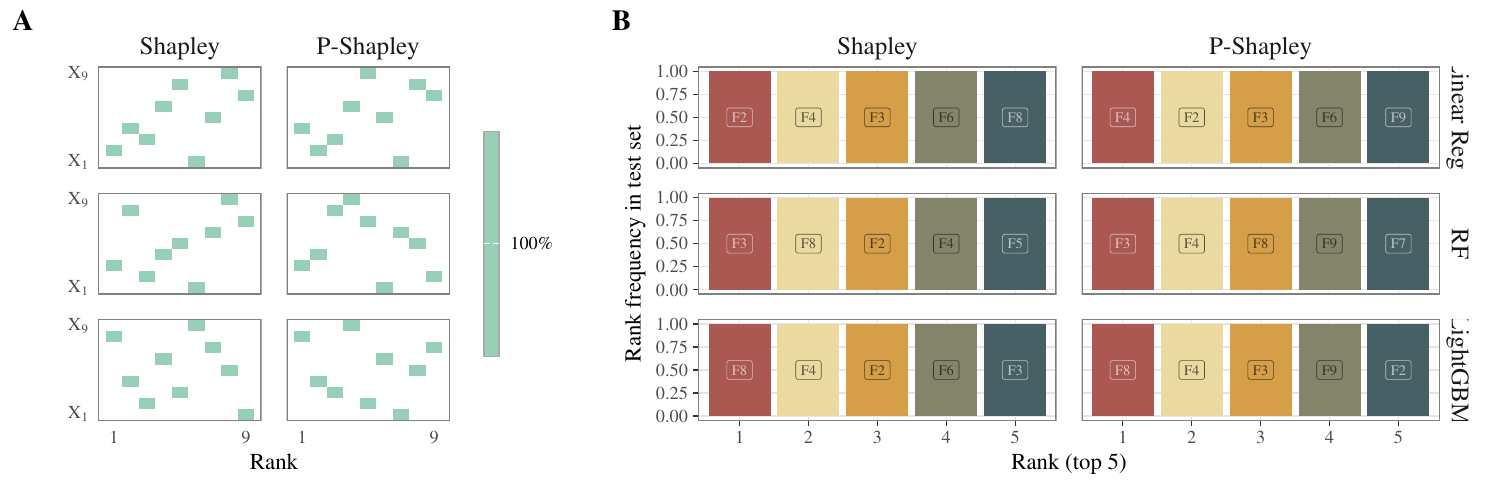}\\
    \includegraphics[width=\linewidth]{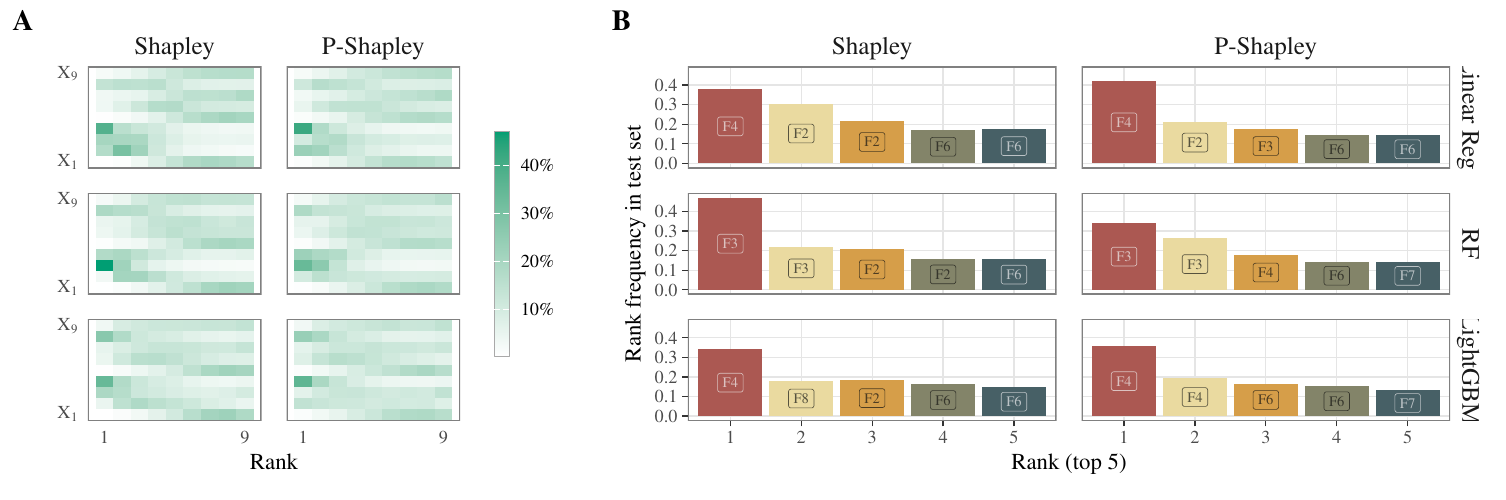}\\
    \includegraphics[width=\linewidth]{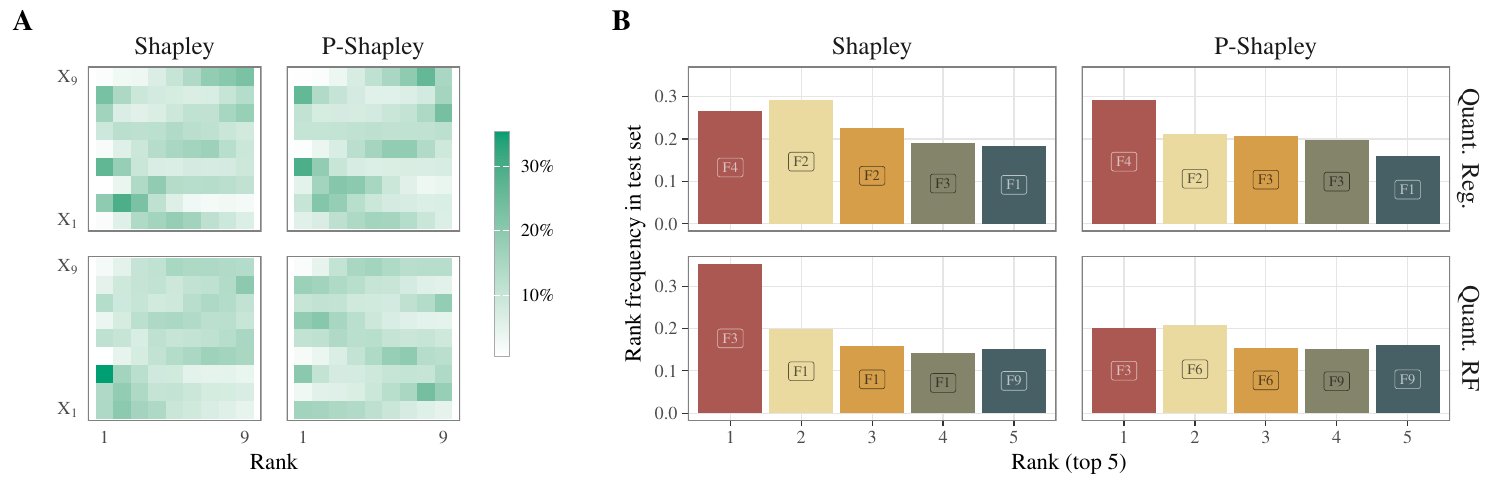}
    \caption{\footerplotranks{casp}}
    \label{fig:top-ranks-casp}
\end{figure}
\

\newpage
%% Concrete dataset
\subsubsection{\texttt{concrete} Dataset}
\begin{figure}[ht!]
    \centering
    \includegraphics[width=\linewidth]{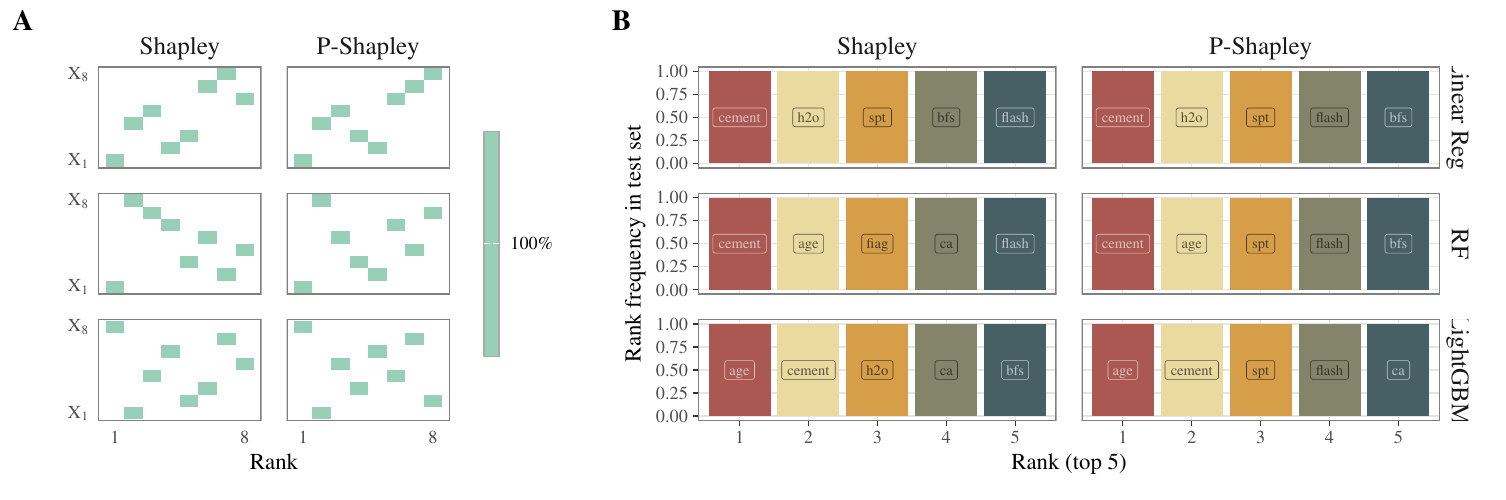}\\
    \includegraphics[width=\linewidth]{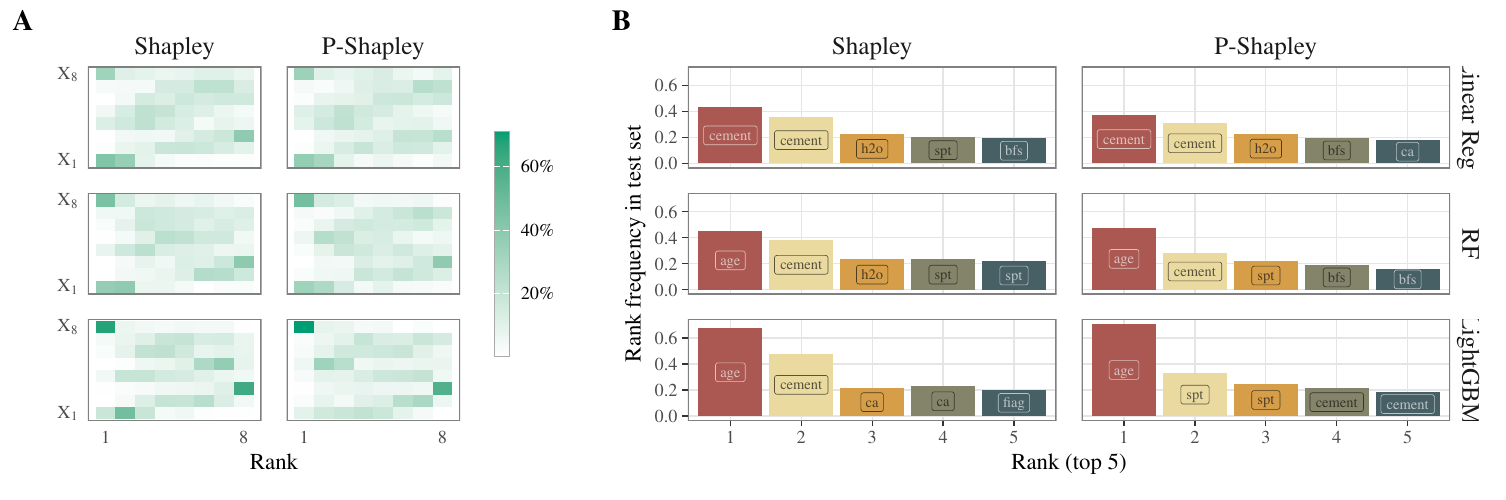}\\
    \includegraphics[width=\linewidth]{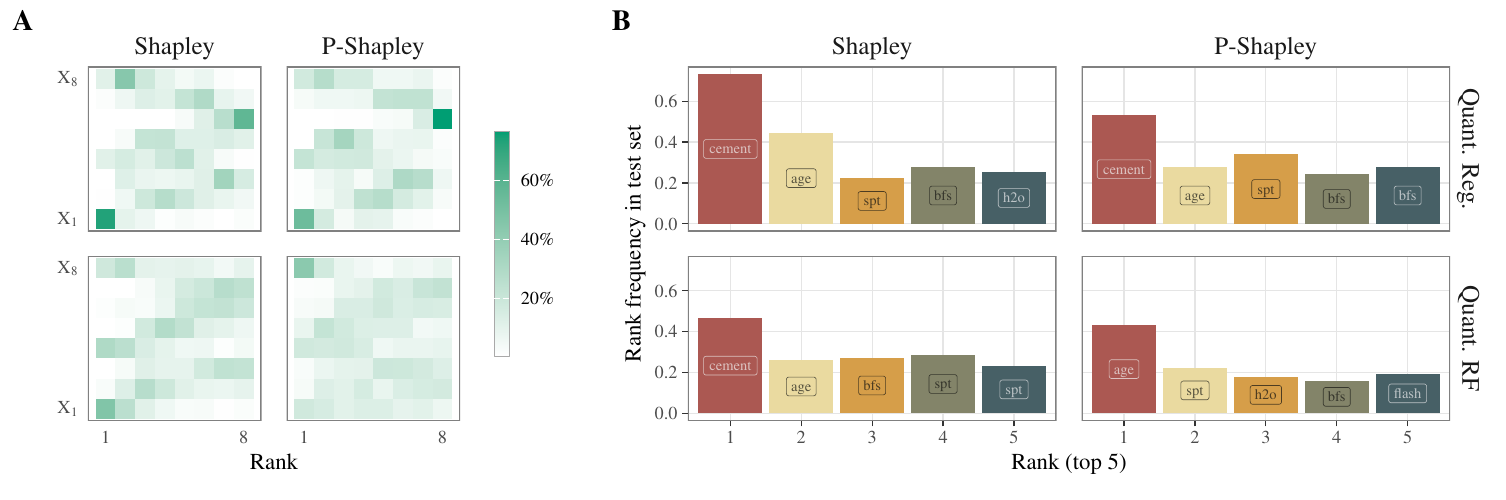}
    \caption{\footerplotranks{concrete}}
    \label{fig:top-ranks-concrete}
\end{figure}
\

\newpage
%% Facebook dataset
\subsubsection{\texttt{facebook} Dataset}
\begin{figure}[ht!]
    \centering
    \includegraphics[width=\linewidth]{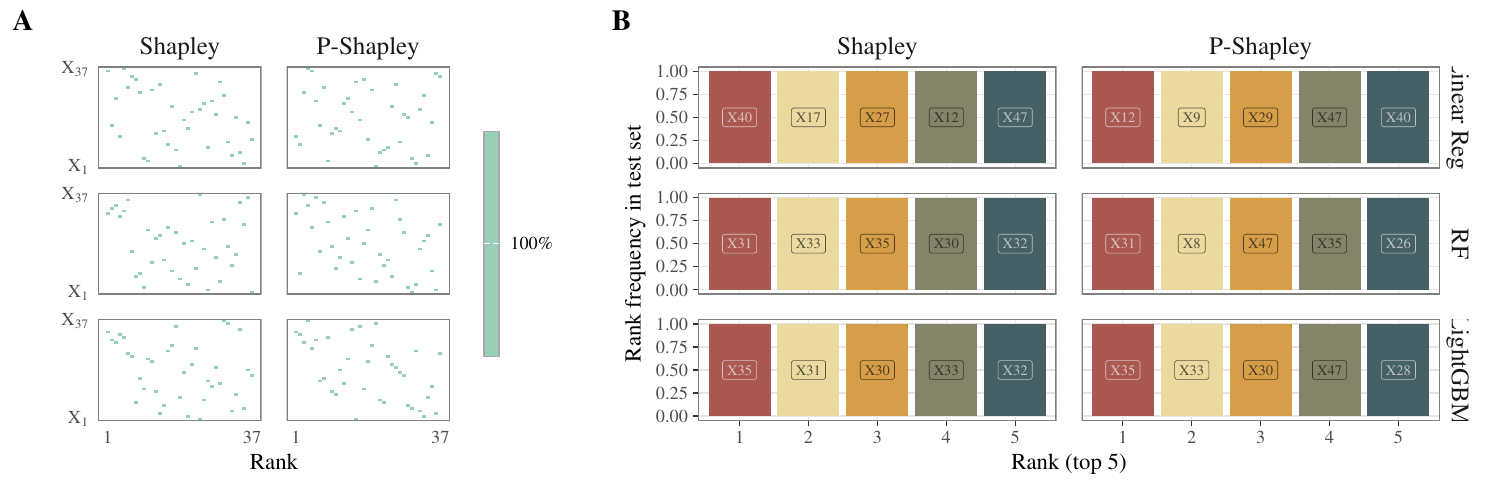}\\
    \includegraphics[width=\linewidth]{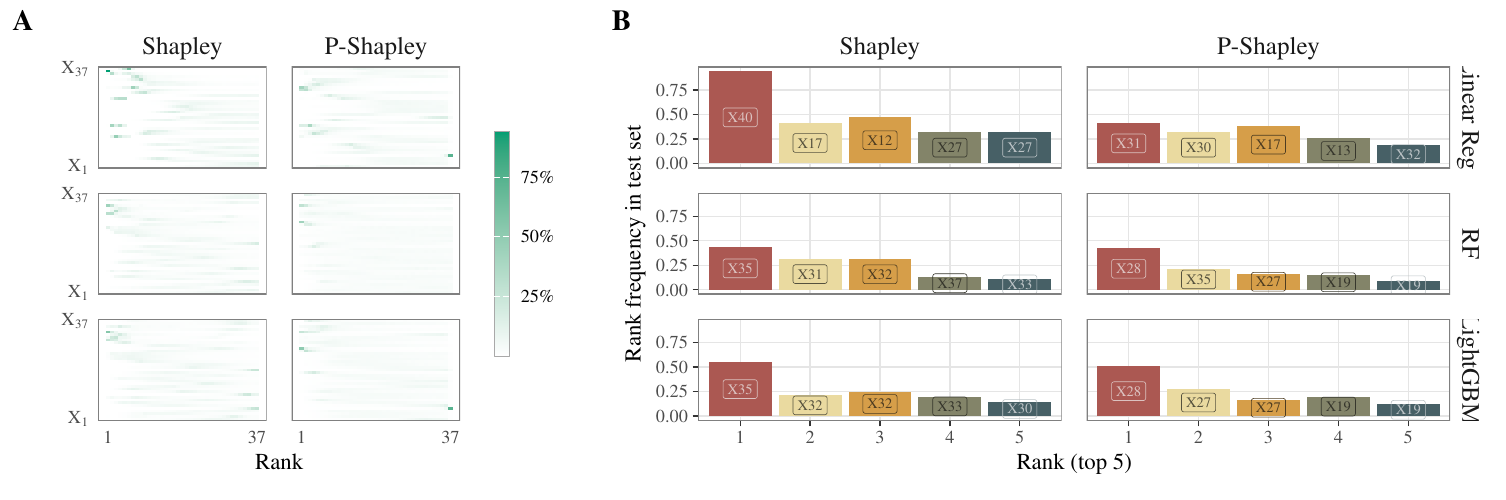}\\
    \includegraphics[width=\linewidth]{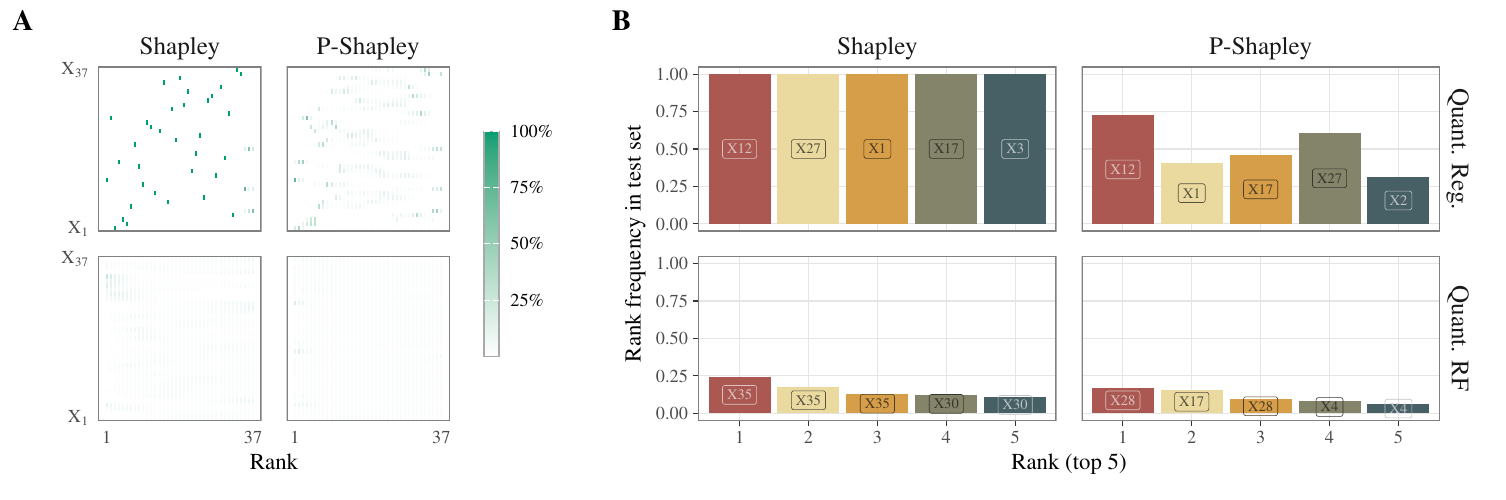}
    \caption{\footerplotranks{facebook}}
    \label{fig:top-ranks-facebook}
\end{figure}
\

\newpage
%% UScrime dataset
\subsubsection{\texttt{UScrime} Dataset}
\begin{figure}[ht!]
    \centering
    \includegraphics[width=\linewidth]{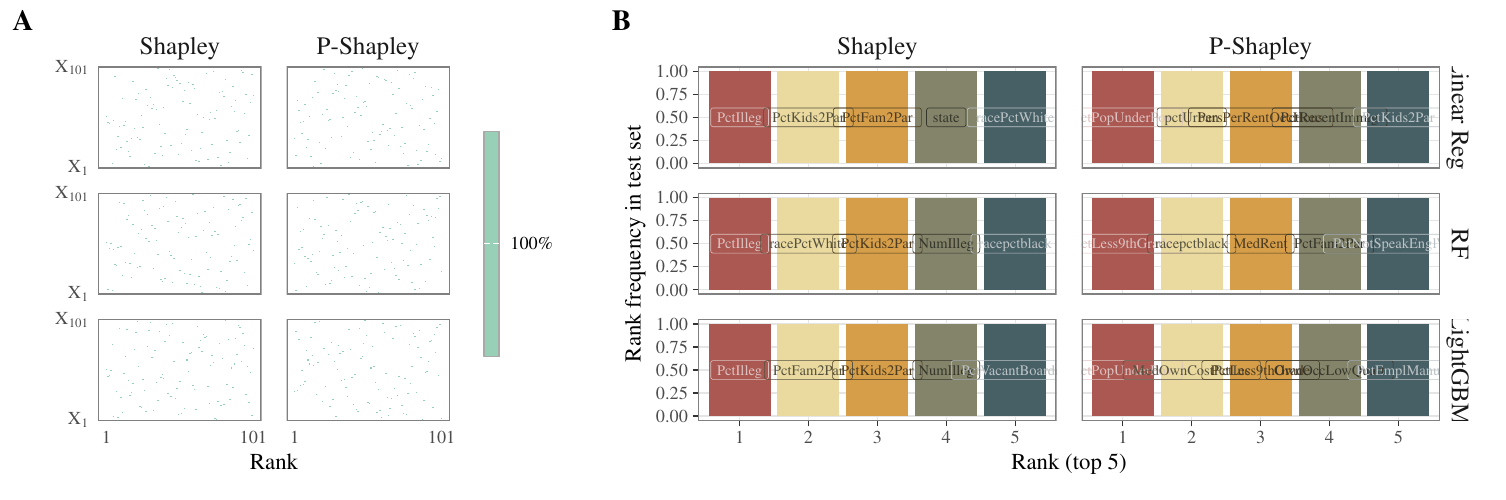}\\
    \includegraphics[width=\linewidth]{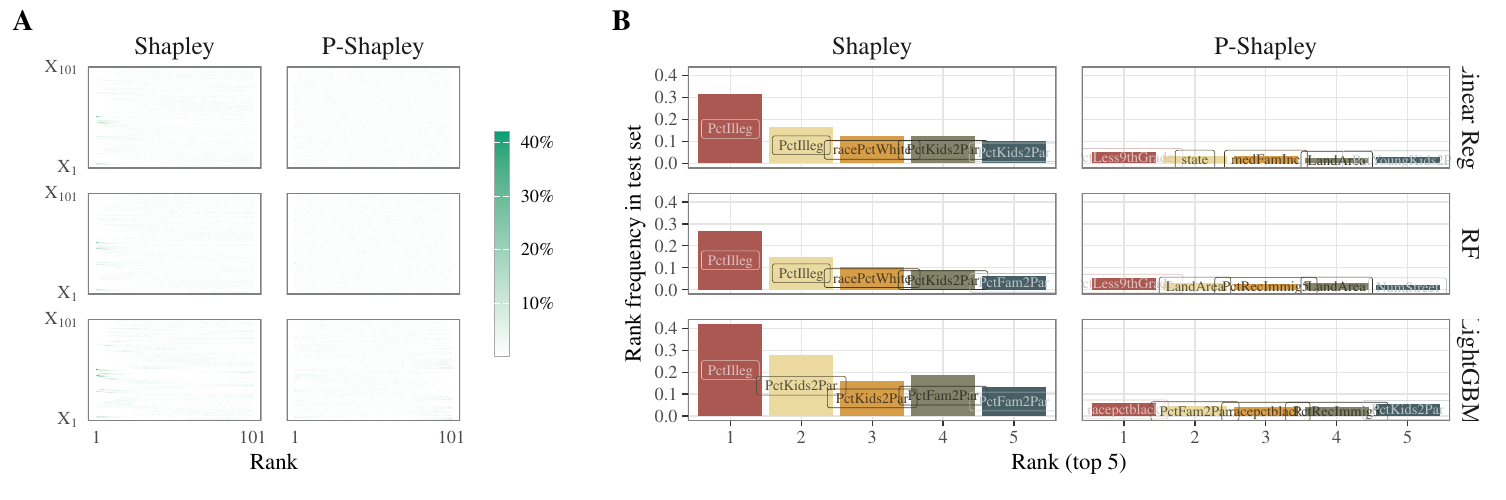}\\
    \includegraphics[width=\linewidth]{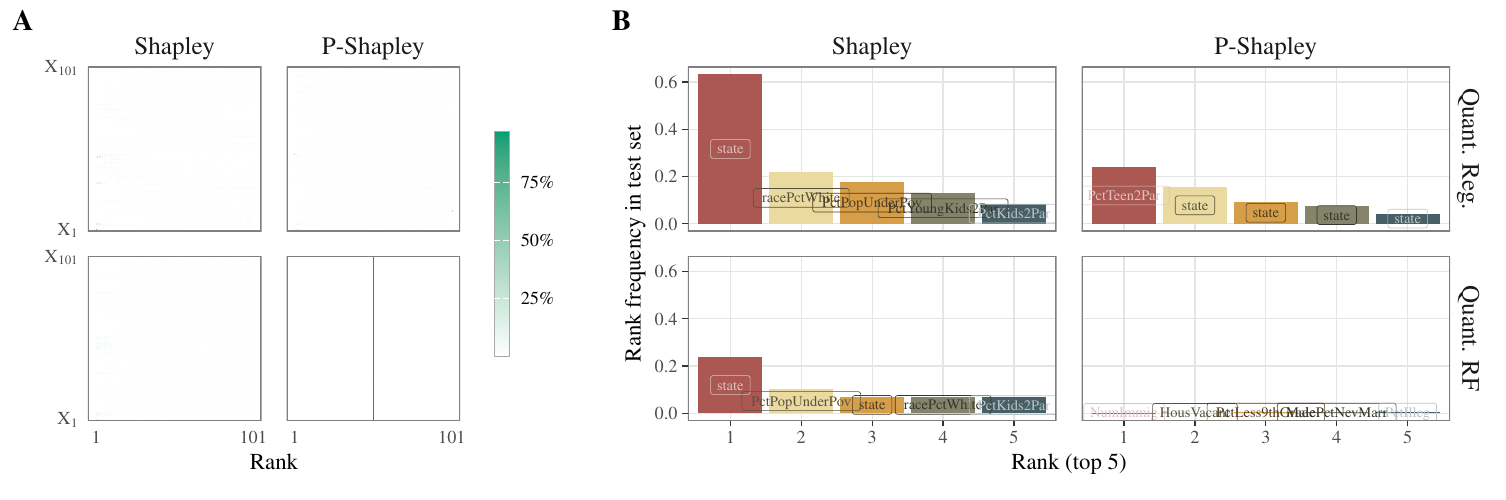}
    \caption{\footerplotranks{uscrime}}
    \label{fig:top-ranks-uscrime}
\end{figure}
\

\newpage
%% Star dataset
\subsubsection{\texttt{star} Dataset}
\begin{figure}[ht!]
    \centering
    \includegraphics[width=\linewidth]{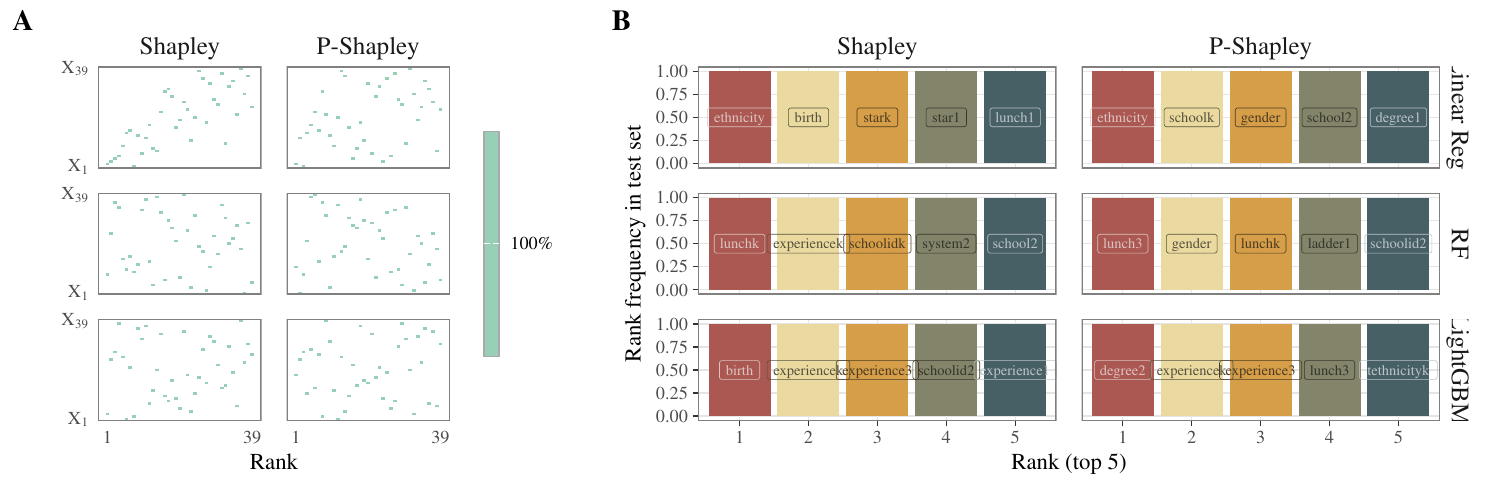}\\
    \includegraphics[width=\linewidth]{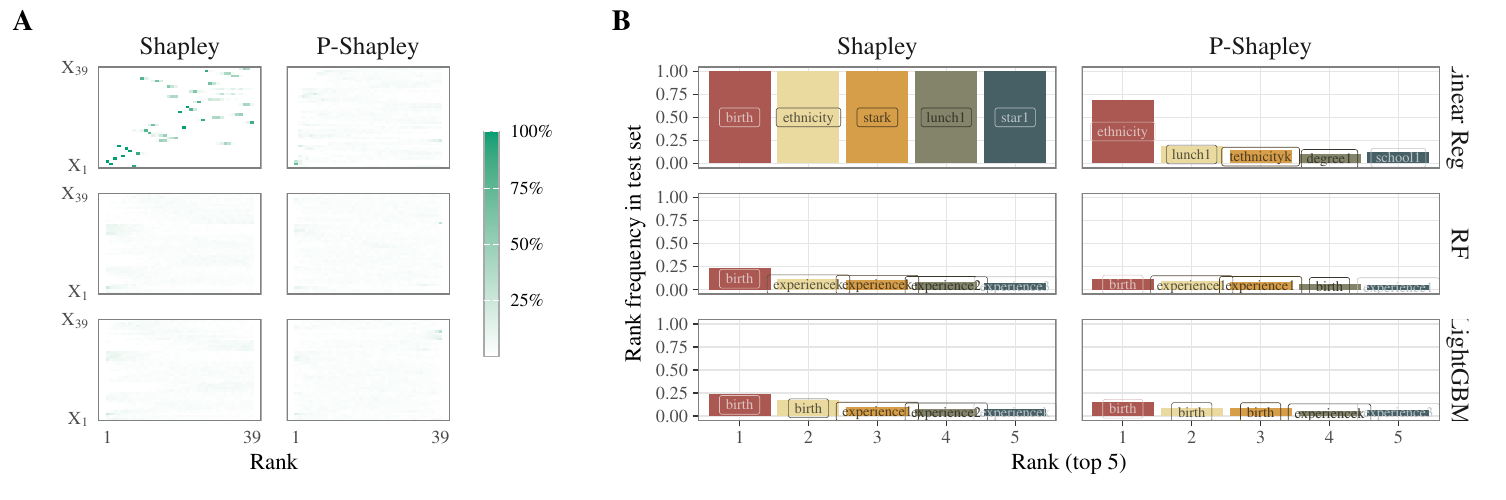}\\
    \includegraphics[width=\linewidth]{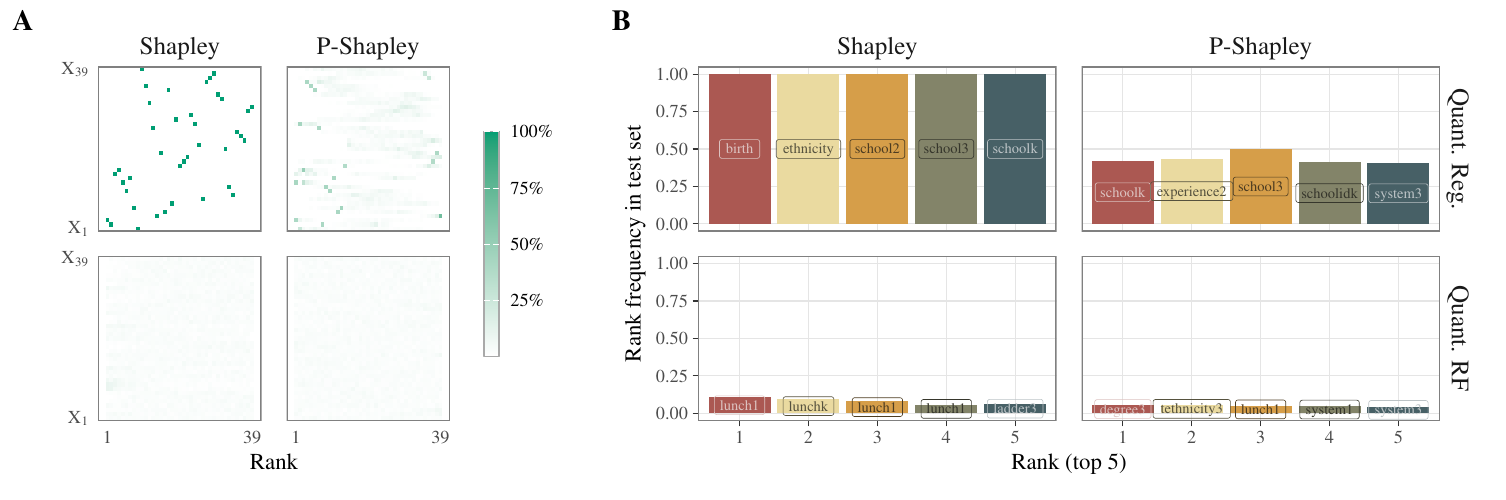}
    \caption{\footerplotranks{star}}
    \label{fig:top-ranks-star}
\end{figure}
\

\end{document}